\documentclass[10pt,twocolumn]{article}

\usepackage{amsmath,mathrsfs,amsthm}

\usepackage{multirow}
\usepackage{enumitem}
\usepackage{amssymb}
\usepackage{nicefrac,booktabs}
\usepackage{dsfont}
\usepackage[mathscr]{euscript}
\usepackage{nth}
\usepackage[dvipsnames,svgnames]{xcolor}
\usepackage{graphicx}
\usepackage{tikz}
\usetikzlibrary{arrows.meta, positioning}

\usepackage{caption}
\usepackage{subcaption}

\usepackage{stmaryrd}
\usepackage{amsmath,amsfonts,bm}

\makeatletter
\newcommand{\manuallabel}[2]{\def\@currentlabel{#2}\label{#1}}
\makeatother

\theoremstyle{plain}
\newtheorem{theorem}{Theorem}[section]

\theoremstyle{definition}
\newtheorem{definition}[theorem]{Definition}

\theoremstyle{definition}
\newtheorem*{example*}{Example}
\newtheorem*{condition*}{Condition}

\theoremstyle{remark}

\newcommand{\mf}[1]{\mathbf{#1}}

\newcommand{\rmm}[1]{\mathrm{#1}}
\newcommand{\R}{\mathbb{R}}

\newcommand{\A}{\mathbb{A}}

\newcommand{\E}{\mathbb{E}} 

\newcommand{\cE}{\mathcal{E}}
\newcommand{\cP}{\mathcal{P}}\newcommand{\cT}{\mathcal{T}}

\newcommand{\cU}{\mathcal{U}}

\newcommand{\cG}{\mathcal{G}}
\newcommand{\cR}{\mathcal{R}}
\newcommand{\cV}{\mathcal{V}}
\renewcommand{\cP}{\mathcal{P}}

\newcommand{\fR}{\mathfrak{R}}

\newcommand{\fE}{\mathfrak{E}}
\newcommand{\fT}{\mathfrak{T}}
\newcommand{\fK}{\mathfrak{K}}

\DeclareMathAlphabet{\mathpzc}{OT1}{pzc}{m}{it}

\newcommand{\BRac}[1]{\Big(#1\Big)}

\newcommand{\set}[1]{\left\{#1\right\}}

\newcommand{\norm}[1]{\left\vert#1\right\vert}

\newcommand{\mgg}{\mathrm{g}}
\newcommand{\Enc}[3]{\mf{h}^{(#3)}_{#1\vert #2}}
\newcommand{\EncR}[1]{\mf{R}_{\vert #1}}
\newcommand{\mc}[1]{\mathcal{#1}}
\newcommand{\Motif}{\rmm{Motif}}
\newcommand{\Ultra}{\rmm{Ultra^+}}
\newcommand{\triad}[4]{
  \begin{scope}[shift={(#1,#2)}]
    \node (a) at (0,0) {};
    \node (b) at (1,0) {};
    \node (c) at (0.5,0.866) {};  
    #3  
    \node at (0.5,-0.5) {#4}; 
  \end{scope}
}

\newcommand{\triadT}[4]{
  \begin{scope}[shift={(#1,#2)}]
    \node (a) at (0,0) {};
    \node (b) at (1,0) {};
    \node (c) at (1,0.866) {};
    \node (d) at (0,0.866) {};
    #3  
    \node at (0.5,-0.5) {#4}; 
  \end{scope}
}
\newcommand{\triadC}[4]{
  \begin{scope}[shift={(#1,#2)}]
    \node (a) at (0,0) {};
    \node (b) at (1,0) {};
    \node (c) at (1,0.866) {};
    \node (d) at (0,0.866) {};
    \node (e) at (1,1.566) {}; 
    #3  
    \node at (0.5,-0.5) {#4}; 
  \end{scope}
}

\newcommand{\rMany}[4]{\rmm{\textcolor{ForestGreen!80!black}{#1}#2_{\textcolor{ForestGreen!80!black}{#3}:#4} }}
\newcommand{\Ultrap}[1]{\Ultra[\cV_{#1}^{+}]}
\newcommand{\Ultram}[1]{\Ultra[\cV_{#1}^{-}]}
\newcommand{\Ultrav}[1]{\Ultra[\cV_{#1}]}
\newcommand{\Ultrau}[1]{\Ultra[\cU_{#1}]}

\usepackage{listings}
\lstdefinelanguage{SPARQL}{
  morekeywords={SELECT,CONSTRUCT,ASK,DESCRIBE,WHERE,FROM,NAMED,PREFIX,BASE,OPTIONAL,FILTER,GRAPH,LIMIT,OFFSET,ORDER,BY,ASC,DESC,DISTINCT,REDUCED},
  sensitive=true,
  morecomment=[l]{\#},
  morestring=[b]",
  morestring=[b]',
}

\usepackage[utf8]{inputenc}
\usepackage[T1]{fontenc}
\usepackage[margin=1in]{geometry}
\usepackage{url}
\usepackage{microtype}
\usepackage{algorithm}
\usepackage{algorithmic}
\usepackage{longtable}
\usepackage{newfloat}
\usepackage{listings}
\usepackage{wrapfig}
\usepackage{colortbl}
\usepackage[round]{natbib}
\usepackage{hyperref}
\usepackage[capitalize,noabbrev]{cleveref}
\usepackage{footnote}

\hypersetup{
	pdftitle={Graphlets as Building Blocks for Structural Vocabulary in Knowledge Graph Foundation Models},
	pdfauthor={Kossi Amouzouvi},
	pdfsubject={Knowledge graph foundation models},
	pdfkeywords={Knowledge Graph, Graph Foundation Model, Knowledge Graph Representation Learning, Graphlets},
	colorlinks=true,
	linkcolor=blue,
	citecolor=blue,
	urlcolor=blue
}

\title{Graphlets as Building Blocks for Structural Vocabulary in Knowledge Graph Foundation Models}
\author{%
	\parbox{0.96\textwidth}{\centering
		Kossi Amouzouvi\textsuperscript{*1,2}, Robert Wardenga\textsuperscript{*3}, Jens Lehmann\textsuperscript{**4}, Sahar Vahdati\textsuperscript{1,5}\\[0.25em]
		\normalsize \textsuperscript{1}ScaDS.AI Dresden/Leipzig, Technische Universit\"at Dresden, Dresden, Germany\\
		\normalsize \textsuperscript{2}Department of Mathematics, KNUST, Kumasi, Ghana\\
		\normalsize \textsuperscript{3}alphaspeech - c/o alpha NT GmbH, Dresden, Germany\\
		\normalsize \textsuperscript{4}Amazon, Technische Universit\"at Dresden, 
		Dresden, Germany\\
		\normalsize \textsuperscript{5} TIB – Leibniz Information Centre for Science and Technology, Hannover, Germany\\	[0.35em]
		\normalsize \texttt{\{kossi.amouzouvi,jens.lehmann,sahar.vahdati\}@tu-dresden.de}, \texttt{robert.wardenga@alphaspeech.de}, \texttt{sahar.vahdati@tib.eu}\\
		\normalsize \textsuperscript{*}Equal contribution; **  Work done outside of Amazon
	}
}
\date{}

\begin{document}
	\maketitle
	
	\begin{abstract}
		Foundation models excel at language, where sentences become tokens, and vision, where images become pixels, because both reduce to discrete symbols on a shared, fixed grid. Knowledge Graphs share the discreteness, but not the geometry. Their entities and relations are discrete symbols, yet their arrangement is relational and lacks a common, fixed grid. Knowledge Graphs (KGs) share the discreteness, but not the geometry.
		They form irregular, non-Euclidean topologies whose local neighborhoods differ from graph to graph. Therefore, Knowledge Graph Foundation Models (KGFMs) rely on identifying structural invariances to produce transferable representations. Without a universal token set, KGFMs are limited in their ability to transfer representations across unseen KGs. We close this gap by treating graphlets, small connected graphs, as structural tokens that recur in heterogeneous KGs. In this paper, We introduce a model-agnostic framework based on a vocabulary of graphlets that mines a KG between relations via pattern matching. In particular, we considered closed and open 2- and 3-path, and star graphlets, to obtain robust invariances. The framework is evaluated on 51 KGs from a wide range of domains, for zero-shot inductive and transductive link prediction. Experiments show that adding simple graphlets to the vocabulary yields models that outperform prior KGFMs. 
	\end{abstract}

	\section{Introduction}
	Recently, Large language Models (LLMs), have garnered significant attention for their remarkable natural language understanding capabilities \citep{bommasani2021opportunities, zhao2023survey, wei2022emergent}. These models are pretrained on massive corpora of diverse text data \citep{chang2024survey}, allowing them to learn not only the syntax and grammar of language but also the semantics and contextual usage of words and phrases. 
	Despite differences in architecture (e.g., transformer-based, decoder-only, encoder-decoder), all LLMs operate on tokens; basic units of text that may be whole words or subwords. 
	During training, the models construct a universal vocabulary of tokens. 
	This token-based processing enables LLMs to generalize effectively to new words by breaking them down into familiar token components. In turn, complete sentences can be reconstructed from these tokens. 
	As a result, LLMs achieve strong generalization across languages \citep{lin2021few}, domains~\citep{pan2024unifying}, and  \citep{
		brown2020language, wang2022language}. 
	The (unstructured) textual data can be saved as tripled-based data, known as Knowledge Graphs  ~\citep{hogan2021knowledge, noy2019industry, ehrlinger2016towards}.  
	Knowledge Graph embeddings (KGEs) are a class of representation learning models specialized for Knowledge Graphs (KGs). They learn entity and relation embeddings 
	based on their labeled identities and the structure of the triples they participate in~\citep{wang2017knowledge}. While effective at modeling relational patterns, they are limited in their ability to generalize to unseen entities or relations. 
	
	In contrast to LLMs, KGEs do not capture any natural understanding of the labels themselves. As a result, adapting KGEs to new entities or relation types typically requires retraining from scratch on augmented data~\citep{hamaguchi2017knowledge, teru2020inductive}. 
	To overcome this limitation, recent approaches explore structure-driven generalization in ~\citep{liu2021indigo}. One idea is to treat structural patterns in a graph analogously to how LLMs treat tokens in text data. These patterns capture local and global structural invariants independent of specific entity labels or relation types.
	\newline
	\textbf{Motivating Example.} Let us consider the three illustrative KGs: Family, Corporate and Scholarly, shown in Figure~\ref{fig:vert}. Despite the fact that the type of relations and entity labels are not the same, their underlying structural topology is the same. 
	This allows for a mapping between their relations (\emph{grand\_father\_of} $\leftrightarrow$ \emph{grandvisor}, \emph{son\_of} $\leftrightarrow$ \emph{mentor\_of}, \emph{wife\_of} $\leftrightarrow$ \emph{cofounder\_with})  enabling structure-level transfer. 
	This core insight forms the basis of KGFMs. 
	\begin{figure*}[t] 
		\centering
		\vspace{0.5\baselineskip}
		\includegraphics[width=1\linewidth,trim=0 10 0 5]{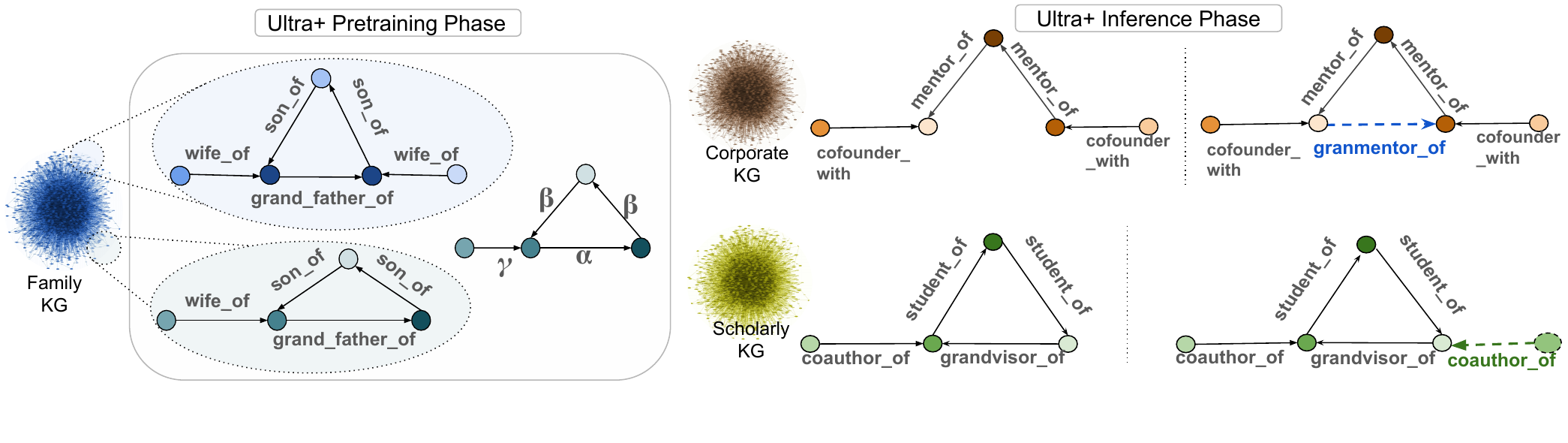}
		\caption{The KGFM model $\Ultra$, pretrained on a large collection of KGs, including the Family KG, recognizes the Corporate, and Academic KGs as instances of the same graphlet patterns.}
		\label{fig:vert}
	\end{figure*}
	Rather than embedding labels, KGFMS (\citeauthor{galkin2023towards}, ~\citeauthor{huang2025expressive}) learn to reason over vocabulary of relations forming relational subgraphs. 
	Group of ordered relations that co-occur within a specific type of subgraph is referred to as an occurrence of a graphlet. These graphlets form the core of the structural vocabulary used to construct a relation graph, where relations become nodes, and the graphlets define typed edges between them. 
	As shown in Figure~\ref{fig:vert}, a KGFM would detect and save the pattern formed by the cycle $(\gamma, \alpha, \beta, \beta)$ as part of its vocabulary. This learned pattern can then be used to infer missing facts such as the triple  \emph{mentor\_of}(A. Einstein, ?). 
	However, existing KGFMs have limitations. First, they fail to distinguish between closed and open paths, treating all subgraphs of similar size as equally informative. Second, a single occurrence of a graphlet is often enough to connect involved relations in the relation graph, which may lead to decreased robustness.
	
	\textbf{Our Contributions.} 
	The key contributions of our model $\Ultra$ are  as follows
	
	(i) {\bf Query-based relation graph extraction}: We propose a flexible SPARQL-based extraction method that efficiently identifies informative structures without relying on sparse matrix multiplication (SPMM); 
	
	(ii) {\bf Closed and open relations}: We incorporate both closed and open graphlets to capture a wider range of relational patterns; 
	
	(iii) {\bf Binary relation}: We represent graphlets as positional binary relations than traditional n-ary relations; and
	
	(iv) {\bf Model-agnostic design}: Ultra+ is modular and can be integrated into any KGFM that uses relation graphs or structural vocabulary, making it 
	adaptable across KGFM architectures. 
	
	$\Ultra$ substantially strengthens how KGFMs capture and represent complex structural patterns. 
	Its main objective is not to propose a new model architecture, but to boost performance and create more robust KGFMs by enriching their structural vocabulary.

	\section{Related Work}
	A KGE model learns the vector representations  of relations ($\mf{r}$) and entities ($\mf{E}$) of KGs via parametrized relation-specific transformation functions, $\Phi_{\mf{r}}: $$\mf{h} $$\mapsto $$\Phi_{\mf{r}}(\mf{h}),$ 
	over the entity embedding space.  
	KGEs can be categorized based on their underlying principles such as geometric transformations, tensor decomposition, deep neural architectures, or foundational graph approaches. 
	
	\textbf{Geometric and Tensor Decomposition Models.} 
	These models can be grouped into three main types. The first are translational-based models such as $\mathrm{TransE}$~\citep{bordes2013translating} $, \mathrm{TransH}$ ~\citep{wang2014knowledge}, and $\rmm{TransR}$~\citep{lin2015learning}, which embed entities and relations in real vector space and use identity mappings ($\Phi_{\mathbf{r}} = \mathbf{r}$). $\mathrm{TransH}$ and $\mathrm{TransR}$ add extra relation embeddings, 
	to improve modeling capacity. 
	While effective for link prediction, these models struggle with relational patterns like closed paths. 
	To address these limitations, rotation-based models such as $\mathrm{RotatE}$ ~\citep{sun2019rotate} were introduced. $\mathrm{RotatE}$ represents relations as rotations: $\Phi_{\mathbf{r}}(\mathbf{h}) = e^{i\theta_r}\mathbf{h}$ with $\theta_r \in (-\pi, \pi]^d$. This shift from real to complex algebra enables better modeling of relational patterns.  
	However, these models are not generalizable to new entities and relations. 
	
	\textbf{Deep Neural Network Models.} These models leverage deep learning to extract graph representations. A primary category includes Graph Neural Networks (GNNs), especially graph convolutional models that iteratively aggregate information from neighboring nodes. 
	A pioneer work is  Relational GNC~\citep{schlichtkrull2018modeling}, an encoder-decoder model. The encoder of {R-GCN} learned latent embedding of entities, which are passed to the decoder based on DistMult, a tensor decomposition model. 
	However, R-GCNs does not learn relation embeddings. To address this limitation, {TransGCN}, {RotatEGCN} \citep{cai2019transgcn}, and {ComplexGCN}~\citep{ZEB2022116796} integrate GCNs with 
	geometric KGE models like {TransE}, {RotatE}, and {ComplEx} to jointly learn entity and relation embeddings and capture richer structural semantics. 
	End-to-end trainable GNNs are restricted to a single KG downstream task and cannot generalize to new KGs without retraining.
	
	\textbf{Knowledge Graph Foundation Models.} 
	KGFMs overcome these limitations by enabling pretrained GNNs or LLMs to inductively generalize to new KGs in zero or few-shot paradigms~\citep{wang2025graph, Liu2023TowardsBeyond}.  {ULTRA}~\citep{galkin2023towards}, a KGFM for KG reasoning, 
	constructs a relation graph whose nodes are the relations from the original KG, and edges represent the connections between relations in paths of length two. 
	Leveraging on the invariance of relational structure across datasets,  
	a labeling trick, and conditional representations on both relations and entities, 
	{Ultra} enhances the generalization of KG reasoning. {ULTRAQuery}~\citep{galkin2024zero} exploits the ability of {Ultra} in KG reasoning to find potential missing links, and uses non-parametric fuzzy logic operators to answer complex questions. 
	{AnyGraph} ~\citep{xia2024anygraphgraphfoundationmodel} overcomes the limit of {Ultra}, by generalizing to in- and cross- domain link prediction, and node and graph classification tasks. 
	The key factor behind the success of existing KGFMs lies in the  construction of suitable graph vocabularies~\citep{Mao2024Position:Here}, i.e. basic transferable units that underlie graphs. 
	While models such as Mole-BERT relies on context-aware atom vocabulary~\citep{xia2023molebert} for molecule graph classification, {Ultra} and {Motif}~\citep{huang2025expressive} rely on paths of length two and motifs to define graph vocabulary, respectively. 
	However, these approaches overlook closed paths, which are prevalent and essential.
	
	In this paper, we extend the {Ultra} framework by introducing a novel graphlet-based vocabulary for KG reasoning. 
	Unlike Ultra, our framework explicitly encodes cyclic structures, this allows us to capture richer structural patterns beyond simple paths of length two. 
	Moreover, in contrast to {Motif}, our vocabulary supports higher-order interaction patterns via binary relations within a standard relation graph, rather than relying on n-ary relations in a relation hypergraph. This ensures our relation graph remains a simple KG, preserving compatibility with most established KG processing techniques. 
	
	\section{Preliminaries}\label{sec:Preliminaries}
	\subsection{Inductive Knowledge Graph Embeddings}\label{sec:Knowledge Graphs}
	\vspace{-0.3cm}
	We consider a \emph{Knowledge Graph} as a multi-relational directed graph $ K =(\mathcal{E}^K, \mathcal{R}^K, \mathcal{T}^K_+)$ where $\mathcal{E}^K, \mathcal{R}^K,$ and $\mathcal{T}_+^K$ are the set of nodes (entities), edge labels (relations), and  
	ordered pairs (edges or triples) formed as \emph{relation(head entity, tail entity)} respectively. We refer to the head and tail entity as $h$ and $t$ or $e$ in general, and to the relation as $r$ or $q$. 
	$\mathcal{T}^K_+$ is a subset of $ \mathcal{T}^K$ which contains all plausible triples; and $\mathcal{T}^K_- 
	= \mathcal{T}^K \setminus \mathcal{T}^K_+$ is the set of 
	corrupted triples. 
	Since all entities in $\mathcal{E}^K$ are used in constructing $\mathcal{T}^K_+,$ corrupted triples (used as negative samples)
	result from replacing the head or the tail entity of true triples. 
	The set $\mathcal{N}(r, t) = \set{h \vert r(h, t) \in \cT^+}$ is called the neighborhood of $t$. 
	
	A \emph{relation graph} is constructed from the KG by examining how relations within a target KG appear collectively in subgraphs. The labels on its edges and nodes originate from subgraph configurations and relations within the target KG. 
	The relations in a relation graph can also be referred to as \emph{meta-relations} (see Section \ref{sec: A Structural Vocabulary for KGFMs} for more details). 
	
	KGE models pretrained on $K$ are evaluated on a test KG $K_{test} = (\cE^K_{test}, \cR^K_{test}, \cT^K_{test+})$  
	to predict missing links. 
	Entities in $\cE^K_{test} \setminus \cE^K$ and relations in $\cR^K_{test} \setminus \cR^K$ are called unseen entities and relations, respectively. 
	A KGE model is a \emph{transductive} model if $\cE^K_{test} \subseteq \cE^K$ and $\cR^K_{test} \subseteq \cR^K,$ an \emph{inductive} model otherwise. 
	\emph{Zero-Shot Link Prediction} (ZSLP) involves evaluating models pretrained on $K$, directly on $K_{test}$. 
	Inductive ZSLP can be categorized into three main tasks: \emph{Relation learning} involving predicting unseen relations (Ind.(r)), \emph{entity learning} focusing on predicting facts involving unseen entities (Ind.(e)), and \emph{graph transfer} requiring generalization to both unseen entities and relations (Ind.(e,r)).
	
	\subsection{Knowledge Graph Homomorphism}
	A key feature of our framework is its ability to detect occurrences of specific subgraph patterns within a KG by leveraging graph homomorphisms for structural matching.
	
	A \emph{KG homomorphism} is a structure preserving mapping between two KGs. 
	It consists of entity and relation mappings to relate entities and relations from one KG to the other. Thus, $\phi : K \to K'$ is a KG homomorphism if there exists two mappings 
	$\eta: \cE^K \to \cE^{K'}$ and $\rho: \cR^K \to \cR^{K'}$ so that the product function $\eta\cdot\rho\cdot\eta$ maps any triple $r(h, t) \in \cT^{K}$ to $\phi(r(h, t)) $ = $ \rho(r)(\eta(h), \eta(t)) $ $\in \cT^{K'}$. 
	$\phi(K) = \left( \eta(\cE^K), \rho(\cR^K), \phi(\cT^K)\right)$, the \emph{image KG} of $K$ by $\phi,$ is a subgraph of $K'.$ 
	A homomorphism $\phi$ is said to be \emph{injective} if $\eta$ and $\rho$ are both injective. An injective KG homomorphism $\phi$ is called \emph{KG monomorphism} and its mapping is represented by $\phi: K \xrightarrow[]{mon} K'.$

	\subsection{Graphlets and Motifs}
	The objective of our proposed method is to represent relational invariances and develop a structural vocabulary. To this end, the method identifies different graphlet occurrences within a KG. 
	Usually graphlets are defined as induced subgraphs with respect to a Knowledge Graph while motifs are graphlets that occur frequently in a given KG \citep{prvzulj2007biological, milo2002network, ribeiro2021survey}. 
	To represent relational patterns in a more nuanced way, the following definitions are employed.
	
	\begin{definition}[Graphlet]\label{def:graphlet}
		A graphlet $\cG = (G, \mgg)$ is a (small) connected Knowledge Graph $G=(\cE, \cR, \cT)$ with an order $\mgg$ on $\cR.$ 
		$\mgg(\cR)=\mgg\left(r_{1}, r_{2}, \ldots, r_{m}\right)$ 
		a tuple defining the order on $\cR\ni r_i.$ 
		The cardinality of a graphlet $\cG$ is the number of its edges, $card(\cG) = \norm{\cT}$.
	\end{definition}
	The order specifies the manner in which directed relations within $\cR$ connect their endpoints to create the graph structure.  
	The order can relate two or many relations in $\cR$; it is called \emph{binary} and \emph{n-ary} respectively. 
	The arity ($n$) can be reduced from $n$ to $m$ with $m\leq n$ by only considering $m$ relations as arguments and the $n-m$ remaining relations become dummy arguments. 
	This order is referred to as a \emph{positional m-ary order}; it is denoted by $\mgg_{i_1, \ldots, i_m \leq n}$ when the attention is to stress on the position of the arguments and the initial arity. 
	Thus $\mgg_{1, 2, 3 \leq 3}$ is an ordinary 3-ary order whereas $\mgg_{1, 3 \leq 3}$ is a positional binary order. 
	\begin{theorem}
		\label{theo:pos order spans n-ary}
		Any positional m-ary order, $m < n,$ spans a group of n-ary orders.  
	\end{theorem}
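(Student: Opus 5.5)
The plan is to make the statement precise and then show that a positional $m$-ary order $\mgg_{i_1,\ldots,i_m\leq n}$ determines a whole family of $n$-ary orders, one for each way of filling the $n-m$ dummy positions, and that this family carries a natural group structure. An $n$-ary order $\mgg(r_1,\ldots,r_n)$ will be read as a tuple of relations placed in $n$ positions. The positional $m$-ary order fixes the relations in positions $i_1,\ldots,i_m$ and leaves the remaining positions $J=\{1,\ldots,n\}\setminus\{i_1,\ldots,i_m\}$, with $\norm{J}=n-m>0$, as dummy arguments.

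The first step is to define the set that is spanned:
\[
S\bigl(\mgg_{i_1,\ldots,i_m\leq n}\bigr)=\set{\mgg_{1,\ldots,n\leq n}(r_1,\ldots,r_n) \;\vert\; r_{i_k}\ \text{fixed},\ r_j\in\cR \text{ free for } j\in J}.
\]
These are exactly the $n$-ary orders that reduce to the given positional order when the positions in $J$ are forgotten. Because $m<n$, the set $J$ is nonempty. This is the only point where the hypothesis $m<n$ enters, and it guarantees that the spanned family is genuinely larger than a single order.

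The second step is to exhibit the group. I would let the symmetric group $\mathfrak{S}_J$ (or, more generally, the group of permutations of $\{1,\ldots,n\}$ that fix each $i_k$) act on $n$-ary orders by permuting argument positions: $\sigma\cdot\mgg(r_1,\ldots,r_n)=\mgg(r_{\sigma(1)},\ldots,r_{\sigma(n)})$. I will then check three things. First, the action preserves $S$, since it never moves the fixed arguments. Second, it is a group action, with the identity acting trivially and $(\sigma\tau)\cdot\mgg=\sigma\cdot(\tau\cdot\mgg)$. Third, $S$ is a union of orbits, and every element of $S$ arises from a representative in which the dummies are filled by the elements of $\cR$ in some canonical order.

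It follows that the positional order spans a group of $n$-ary orders in the intended sense: the orders in $S$ are organised as orbits under the stabiliser subgroup $\mathrm{Stab}(i_1,\ldots,i_m)\cong\mathfrak{S}_{n-m}$. If the dummy positions must be filled by the graphlet's own remaining relations, then each orbit has exactly $(n-m)!$ elements.

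The main obstacle is interpretive rather than technical. The paper never defines "spans a group," so the real work is choosing a formalisation that is faithful to the paper's intent and still yields a nontrivial statement. I would first try the stabiliser-orbit formulation above. If the authors intend composition of orders, I would instead take the group to be $\mathfrak{S}_n$ acting on all orders, and show that the positional order corresponds to a coset of the stabiliser. Once a definition is fixed, the verification reduces to checking the group action axioms.
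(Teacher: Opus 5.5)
Your Step 1 is the paper's proof. The paper takes the family $\mgg^{(j)}=\mgg_{j_1,\ldots,j_n}$ whose indices are forced to equal $i_k$ at the $m$ non-dummy positions and are arbitrary at the others. It notes that each member induces $\mgg_{i_1,\ldots,i_m\leq n}$, and stops there. ``Group'' is meant only as ``family'', as the accompanying example $\mgg_{1,1,3},\mgg_{1,2,3},\mgg_{1,3,3}$ for $\mgg_{1,3\leq 3}$ shows.

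Your Step 2, the $\mathfrak{S}_{n-m}$-action on the dummy positions, is extra structure the paper neither has nor needs. It also adds nothing in the case the paper actually uses (Theorem~\ref{theo:robustness}, $n=3$, $m=2$): there the acting group is trivial and every orbit is a singleton. What Theorem~\ref{theo:robustness} does need is the ``induced by'' property. Each $n$-ary equivalence class must sit inside the corresponding class of the positional order, so that the latter's weight is at least that of any member of the family. Your observation that elements of $S$ reduce to the given order once the positions in $J$ are forgotten already supplies this.

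Keep $S$ with $r_j\in\cR$ free and repetitions allowed, and discard the fallback readings. Filling the dummies bijectively with the remaining relations, or taking a coset of a stabiliser in $\mathfrak{S}_n$, would both leave only $\mgg_{1,2,3}$. They would exclude $\mgg_{1,1,3}$ and $\mgg_{1,3,3}$, which the paper explicitly counts.

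One small slip: with $\sigma\cdot(r_1,\ldots,r_n)=(r_{\sigma(1)},\ldots,r_{\sigma(n)})$ you get $\sigma\cdot(\tau\cdot x)=(\tau\sigma)\cdot x$ for a tuple $x$. That is a right action, so use $r_{\sigma^{-1}(i)}$ if you want the law $(\sigma\tau)\cdot x=\sigma\cdot(\tau\cdot x)$ that you wrote.
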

	In particular, the tuple $\mgg_{1,3\leq 3}(\cR)$ could be induced by any of $\mgg_{1,1,3}(\cR), \mgg_{1,3,3}(\cR)$ and $\mgg_{1,2,3}(\cR).$ In other words, if  $\mgg_{1,3\leq 3}(\cR)$ does not exist, then neither $\mgg_{1,1,3}(\cR), \mgg_{1,3,3}(\cR)$ nor $\mgg_{1,2,3}(\cR)$ exist.

	\begin{definition}[Graphlet occurrence]\label{def:graphlet occurrence} Let $K=(\cE^K,\cR^K,\cT^K)$ be a KG, and $\mgg$ a positional m-aray order. 
		A graphlet $\cG = (G, \mgg)$ \emph{occurs}  in $K$ if and only if $G
		$ is monomorphic to $K$, and the monomorphism $\phi_{\mgg}: G\xrightarrow{mon} K$  is an order-preserving mapping; 
		that is, $\rho$ maps $\mgg(\cR)$ to $\mgg\circ\rho (\cR)=\mgg\left(\rho(r_{1}), \rho(r_{2}), \ldots, \rho(r_{m})\right).$ The tuple $\mgg(\cR)$ induces the set of tuples $\mgg(\cR^K) = \set{\mgg\circ\rho (\cR) \vert \rho:\cR \xrightarrow{mon}\cR^K}$.  
		The image graph $\phi_{\mgg}(G),$  is called an \emph{occurrence} of $\cG$ in $K$. The set of occurrences of $\cG$ in $K$ is denoted and defined by  
		$\cG(K) = \set{\phi_{\mgg}(G) \vert \phi_{\mgg}: G \xrightarrow{mon} K}$. 
		Two occurring subgraphs $\phi(G)$ and $\phi'(G)$ are said \emph{equivalent}, $\phi(G)\equiv_{\mgg}\phi'(G),$ if $\rho(r_{i_j}) = \rho'(r_{i_j}) $ for ${j} \leq m, i_j \leq n, $ and $r_{i_j}\in \cR.$ 
		The equivalence class denoted by $\overline{\phi(G)} = \set{\phi'(G)\vert \phi'(G)\equiv_{\mgg}\phi(G) }$ is the set of subgraph occurrences equivalent to $\phi(G)$ and $\overline{\cG(K)} = \set{\overline{\phi(G)} \vert \phi(G) \in \cG(K)}$ is the set of all equivalence classes.
	\end{definition}
	The fact that two classes $\overline{\phi(G)}$ and $\overline{\phi'(G)}$ are different if and only if $\rho(\cR) \neq \rho'(\cR)$ implies that there is a one-to-one correspondence between $\overline{\cG(K)}$ and $\mgg(\cR^K).$ 
	We can therefore represent an equivalence class by the class $\overline{\phi(G)}$ or the tuple $\mgg\circ\rho(\cR).$

	The set of all graphlets of cardinality less than four is displayed in Figure~\ref{fig:graphlets_ord3}. 
	Graphlets play the role of the smallest structural entity in a Graph and are therefore well suited to investigate the local and global structure of a KG. 
	
	\begin{figure*}
		\centering
		\begin{tikzpicture}[>=Stealth, node distance=0.8cm,
			every node/.style={inner sep=1.5pt},
			scale=1]
			
			\tikzset{
				headEdge/.style={
					->,
					draw=ForestGreen!80!black,
					line width=1pt,
					preaction={draw=gray!40, line width=3pt}
				},
				goldEdge/.style={
					->,
					draw=Goldenrod!80!black,
					line width=1pt
				},
				nodeStyle/.style={
					circle,
					fill=black,
					inner sep=0pt,
					minimum size=2pt
				}
			}
			
			\triad{0}{3}{
				\draw[->] (a) -- (b);
				\node[nodeStyle] at (a) {};
				\node[nodeStyle] at (b) {};
			}{(a1): $\rmm{f}$}
			
			\triad{2}{3}{
				\draw[->] (a) edge[bend left] (b);
				\draw[->] (b) edge[bend left] (a);
				\node[nodeStyle] at (a) {};
				\node[nodeStyle] at (b) {};
			}{(b1): $\rmm{ff_c}$}
			
			\triad{4}{3}{
				\draw[->] (a) edge[bend left] (b);
				\draw[->] (a) edge[bend right] (b);
				\node[nodeStyle] at (a) {};
				\node[nodeStyle] at (b) {};
			}{(c1): $\rmm{fr_c}$}
			
			\triad{6}{3}{
				\draw[->] (a) -- (b);
				\draw[->] (b) -- (c);
				\node[nodeStyle] at (a) {};
				\node[nodeStyle] at (b) {};
				\node[nodeStyle] at (c) {};
			}{(d1): $\rmm{ff_o}$}
			
			\triad{8}{3}{
				\draw[headEdge] (b) -- (a);
				\draw[headEdge] (b) -- (c);
				\draw[->] (a) -- (b);
				\draw[->] (c) -- (b);
				\node[nodeStyle] at (a) {};
				\node[nodeStyle] at (b) {};
				\node[nodeStyle] at (c) {};
			}{(e1): $\rmm{fr_o}$, \textcolor{ForestGreen!80!black}{$\rmm{rf_o}$}}
			
			\triad{10}{3}{
				\draw[->] (a) -- (b);
				\draw[->] (b) -- (c);
				\draw[->] (c) -- (a);
				\node[nodeStyle] at (a) {};
				\node[nodeStyle] at (b) {};
				\node[nodeStyle] at (c) {};
			}{(f1): $\rmm{fff_c}$} 
			
			\triad{12}{3}{
				\draw[->] (a) -- (b);
				\draw[->] (b) -- (c);
				\draw[->] (a) -- (c);
				\node[nodeStyle] at (a) {};
				\node[nodeStyle] at (b) {};
				\node[nodeStyle] at (c) {};
			}{(g1): $\rmm{ffr_c}$} 
			
			\triadT{0}{1.25}{
				\draw[->] (a) -- (b);
				\draw[->] (b) -- (c);
				\draw[->] (c) -- (d);
				\node[nodeStyle] at (a) {};
				\node[nodeStyle] at (b) {};
				\node[nodeStyle] at (c) {};
				\node[nodeStyle] at (d) {};
			}{(a2): $\rmm{fff_o}$} 
			
			\triadT{2}{1.25}{
				\draw[headEdge] (b) -- (a);
				\draw[headEdge] (c) -- (b);
				\draw[headEdge] (c) -- (d);
				\draw[->] (a) -- (b);
				\draw[->] (b) -- (c);
				\draw[->] (d) -- (c);
				\node[nodeStyle] at (a) {};
				\node[nodeStyle] at (b) {};
				\node[nodeStyle] at (c) {};
				\node[nodeStyle] at (d) {};
			}{(b2): $\rmm{fff_o}$, \textcolor{ForestGreen!80!black}{$\rmm{rff_o}$}} 
			
			\triadT{4}{1.25}{
				\draw[->] (a) -- (b);
				\draw[->] (c) -- (b);
				\draw[->] (c) -- (d);
				\node[nodeStyle] at (a) {};
				\node[nodeStyle] at (b) {};
				\node[nodeStyle] at (c) {};
				\node[nodeStyle] at (d) {};
			}{(c2): $\rmm{frf_o}$} 
			
			\triadT{6}{1.25}{
				\draw[->] (a) -- (c);
				\draw[->] (b) -- (c);
				\draw[goldEdge] (d) -- (c);
				\node[nodeStyle] at (a) {};
				\node[nodeStyle] at (b) {};
				\node[nodeStyle] at (c) {};
				\node[nodeStyle] at (d) {};
			}{(d2): $\rMany{f}{r}{1,}{2}$} 
			
			\triadT{8}{1.25}{
				\draw[->] (c) -- (a);
				\draw[->] (c) -- (b);
				\draw[goldEdge] (d) -- (c);
				\node[nodeStyle] at (a) {};
				\node[nodeStyle] at (b) {};
				\node[nodeStyle] at (c) {};
				\node[nodeStyle] at (d) {};
			}{(e2): $\rMany{f}{f}{1,}{2}$} 
			
			\triadT{10}{1.25}{
				\draw[->] (c) -- (a);
				\draw[->] (c) -- (b);
				\draw[goldEdge] (c) -- (d);
				\node[nodeStyle] at (a) {};
				\node[nodeStyle] at (b) {};
				\node[nodeStyle] at (c) {};
				\node[nodeStyle] at (d) {};
			}{(f2): $\rMany{r}{f}{1,}{2}$} 
			
			\triadT{12}{1.25}{
				\draw[->] (a) -- (c);
				\draw[->] (b) -- (c);
				\draw[goldEdge] (c) -- (d);
				\node[nodeStyle] at (a) {};
				\node[nodeStyle] at (b) {};
				\node[nodeStyle] at (c) {};
				\node[nodeStyle] at (d) {};
			}{(g2): $\rMany{r}{r}{1,}{2}$} 
			
			\triadC{0}{-1.25}{
				\draw[goldEdge] (a) -- (c);
				\draw[goldEdge] (b) -- (c);
				\draw[->] (c) -- (d);
				\draw[->] (c) -- (e);
				\node[nodeStyle] at (a) {};
				\node[nodeStyle] at (b) {};
				\node[nodeStyle] at (c) {};
				\node[nodeStyle] at (d) {};
				\node[nodeStyle] at (e) {};
			}{(a3): $\rMany{f}{f}{2,}{2}$} 
			
			\triadC{2}{-1.25}{
				\draw[goldEdge] (a) -- (c);
				\draw[goldEdge] (b) -- (c);
				\draw[->] (d) -- (c);
				\draw[->] (e) -- (c);
				\node[nodeStyle] at (a) {};
				\node[nodeStyle] at (b) {};
				\node[nodeStyle] at (c) {};
				\node[nodeStyle] at (d) {};
				\node[nodeStyle] at (e) {};
			}{(b3): $\rMany{f}{r}{2,}{2}$} 
			
			\triadC{4}{-1.25}{
				\draw[goldEdge] (c) -- (a);
				\draw[goldEdge] (c) -- (b);
				\draw[->] (c) -- (d);
				\draw[->] (c) -- (e);
				\node[nodeStyle] at (a) {};
				\node[nodeStyle] at (b) {};
				\node[nodeStyle] at (c) {};
				\node[nodeStyle] at (d) {};
				\node[nodeStyle] at (e) {};
			}{(c3): $\rMany{r}{f}{2,}{2}$} 
			
			\triadC{6}{-1.25}{
				\draw[goldEdge] (a) -- (c);
				\draw[goldEdge] (b) -- (c);
				\draw[goldEdge] (d) -- (c);
				\draw[->] (c) -- (e);
				\node[nodeStyle] at (a) {};
				\node[nodeStyle] at (b) {};
				\node[nodeStyle] at (c) {};
				\node[nodeStyle] at (d) {};
				\node[nodeStyle] at (e) {};
			}{(d3): $\rMany{f}{f}{3,}{1}$} 
			
			\triadC{8}{-1.25}{
				\draw[goldEdge] (a) -- (c);
				\draw[goldEdge] (b) -- (c);
				\draw[goldEdge] (d) -- (c);
				\draw[->] (e) -- (c);
				\node[nodeStyle] at (a) {};
				\node[nodeStyle] at (b) {};
				\node[nodeStyle] at (c) {};
				\node[nodeStyle] at (d) {};
				\node[nodeStyle] at (e) {};
			}{(e3): $\rMany{f}{r}{3,}{1}$} 
			
			\triadC{10}{-1.25}{
				\draw[goldEdge] (c) -- (a);
				\draw[goldEdge] (c) -- (b);
				\draw[goldEdge] (c) -- (d);
				\draw[->] (c) -- (e);
				\node[nodeStyle] at (a) {};
				\node[nodeStyle] at (b) {};
				\node[nodeStyle] at (c) {};
				\node[nodeStyle] at (d) {};
				\node[nodeStyle] at (e) {};
			}{(f3): $\rMany{r}{f}{3,}{1}$} 
			
			\triadC{12}{-1.25}{
				\draw (e) edge[bend left]  (d);
				\draw (e) edge[bend right] (d);
				\draw (a) -- (d);
				\node[nodeStyle] at (a) {};
				\node[nodeStyle] at (d) {};
				\node[nodeStyle] at (e) {};
			}{(g3): $**_{*:*}$} 
		\end{tikzpicture}
		
		\caption{\textbf{Graphlets of size less than 5.} 
			$\rmm{f}$ and $\rmm{r}$ denote forward and reverse edges, and subscripts $\rmm{c}$ and $\rmm{o}$ indicate closed and open paths. 
			The \textcolor{ForestGreen!80!black}{green head arrows} (shown with a light gray halo for clarity) form alternative graphlets, which are also indicated by the green labels to the right of the black text labels. 
			The \textcolor{Goldenrod!80!black}{golden arrows}, together with the black arrows, form distinct topological graphlets. 
			Each vertex is marked with a small filled node for readability. 
			The last four graphlets shown in the third column are not included in our approach.}
		\label{fig:graphlets_ord3}
	\end{figure*}

	\section{Method}
	Our model $\Ultra$ is a generalized extension of the $\mathrm{Ultra}$ framework introduced by \citeauthor{galkin2023towards}, advancing its capabilities in relational pattern learning for KG reasoning. 
	While the original $\mathrm{Ultra}$ relies solely on length-2 paths to define relational dependencies, $\Ultra$ extends this approach by incorporating a richer set of graphlet-based patterns, capturing more complex and higher-order interactions between relations. 
	In contrast to $\Motif$, 
	$\Ultra$ constructs a binary relation graph using graphlets induced by positional binary orders, thereby preserving pairwise semantics. This shift allows $\Ultra$ to encode cycles and subgraph patterns without resorting to hypergraph complexity. 
	
	\subsection{A Structural Vocabulary for Knowledge Graph Foundation models}\label{sec: A Structural Vocabulary for KGFMs}
	The structural vocabulary used to construct the relation graph constitutes the fundamental basis of $\Ultra$.
	
	\begin{definition}\label{def:struct-vocab}
		A structural vocabulary over a KG, $K$, is a finite set $\cV = \set{(G_i, \mgg_i), i\leq n_V}$ of graphlets, and a weighting function 
		\begin{equation}
			\omega: \bigcup_{i} \overline{\cG_i(K)} \to \mathbb{N}, \quad \overline{\phi_{\mgg}(G)} \mapsto \lvert \overline{\phi_{\mgg}(G)} \rvert
		\end{equation}    
		mapping  equivalence classes of occurrences to their cardinalities. The Knowledge Graph $K$ can be a union of KGs, that is $K = \cup_k (\cE^k, \cR^k, \cT^k),$ and the domain of $\omega$ becomes $\bigcup_{i, k} \overline{\cG_i(K_k)}$.
	\end{definition}
	
	\begin{definition}
		Let $K=(\cE^K, \cR^K, \cT^K)$ be a KG, and $\cV$ a structural vocabulary over $K$. 
		We denote the \emph{relation graph} over $K$ upon the structural vocabulary $\cV$ by $\fK =(\fE, \fR, \fT)$ 
		where the set of nodes $\fE=\cR^K$ and meta-relations $\fR = \cV$. 
	\end{definition}
	A structural vocabulary of binary orders yields relation graphs; conversely, relation hypergraphs are generated. 
	The (hyper)edges in $\fT$ are the tuples $\mgg_i\circ\rho(\cR)$ .  
	The tuple $\mgg\circ\rho(\cR)$ does not exist unless its weight is nonzero. We say $\mgg$ is an $\varepsilon$-edge between the relations $\rho(r_1), \cdots, \rho(r_m) \in \cR^K$, if and only if $\omega(\mgg\circ\rho(\cR)) = \varepsilon.$ 
	To define the structural vocabulary, we did not specify any graphlet. This ensures that our framework can accommodate any type of graphlet. $\Ultra$'s structural vocabulary is restricted to (short) path and topology based vocabularies.

	\textbf{Path-Based Vocabulary.} 
	In a graph, $K,$ a path of length $p$, $\cP_p = \set{r_i(e_i, e'_i), i=1, \cdots, p}$ is a subset of $\cT^K$ such that two consecutive triples share one entity. 
	These paths are occurrences of the nine graphlets shown in Figures~\ref{fig:graphlets_ord3}(b1--c2). 
	We shall note that $\rmm{rr_o} (r_1, r_2) $ = $ \set{r_1(e_2, e_1), r_2(e_3, e_2)\in \cT^K} $ = $ \rmm{ff_o}(r_2,r_1).$ That is, $\rmm{rr_o}$ can be substituted by $\rmm{ff_o}$. 
	In summary, the $\cP_2$-based vocabulary $\cV_2 = \set{\rmm{ff_{o, c}, fr_{o, c}, rf_o}}$ is sufficient to characterize all closed or open 2-paths in $K.$ 
	In general, we define two distinct families of $\cP_p$-based vocabularies $\cV_p $ = $ \big\{\rmm{u_{\smile} v_z}\vert \rmm{u, v}\in \set{\rmm{f, r}},$ $  _{\smile}\in \set{\rmm{f, r}}^{i},$ $ i = 0, \ldots, p-2, $ $\rmm{z\in\set{o, c}}\big\}$ and $\cU_p $ = $ \big\{\rmm{u_{\smile} v}\vert \rmm{u, v}\in \set{\rmm{f, r}},$ $  _{\smile}\in \set{\rmm{f, r}}^{i},$ $ i = 0, \ldots, p-2, \big\}.$ $_{\smile}$ is any sequence of length $i$ over the alphabets $\rmm{f, r}.$ $\rmm{u_{\smile}v_z}$ are positional binary orders relating the first and last relations appearing in a path of length $p>1.$ For $p=2, \rmm{u_{\smile}v_z = uv_z}$ and for $p=3,$ $ \rmm{u_{\smile}w_z}  := $ $ \rmm{uvw_z} \in \big\{\rmm{fff_z, ffr_z, frf_z, rff_z}$ $\vert \rmm{z\in\set{o, c}}\big\}.$  
	It follows that $\cV_m \subset \cV_n$ if $m \leq n.$ This remains valid for the $\cU_m.$ It is imperative to note that the $\rmm{u_{\smile}v}$ positional binary orders are incapable of discerning between closed and open paths. We designed variants of $\Ultra$ using these two families of structural vocabularies.

	\textbf{Topology-based Vocabulary.} 
	The degree of an entity in a KG is the sum of incoming and outgoing relations.  The average number of degree per entity informs on the sparsity or density of the KGs. 
	The type of relations surrounding an entity allow us to extract a subgraph centered on that entity, called an \emph{m-star}, where $m$ is the degree of that entity.  
	These m-stars are occurrences of graphlets that form the topology based vocabulary, denoted by $\mathcal{M}_m.$  
	In m-stars, we count how many times each relation appears around the centered entity. For two relations, we have $i+j = m > 2$ and any of the relation can be an ingoing or outgoing relation. We write $\mathcal{M}_{ij},$ to emphasize on the degree of the relations. 
	Figure~\ref{fig:graphlets_ord3} depicts $\mc{M}_{12} = \set{uv_{1:2}\mid u, v\set{f, r}} = \mc{M}_{21} = \mc{M}_{3}, \mc{M}'_{4} = \mc{M}_{3}\cup \mc{M}_{22}$ and $\mc{M}_{4} = \mc{M}_{3}\cup \mc{M}'_{4}$. We combined the $\cV_2$ and the $\cV_3$ with the $\mc{M}_4'$ vocabularies to design two variants of $\Ultra.$

	\subsection{Representation Learning}
	KG representation learning consists of learning the entity and relation embeddings while preserving the KG structure. 
	In our context, relations in $\cR$ are entities in $\fR.$ This duality leads to two representations, as described below. 
	
	\textbf{Relation Embedding.} 
	$\Ultra$ embeds relations (nodes of the relation graph $\fK$) into $d_L$ dimensional real vectors, $\Enc{r}{q}{L},$ by an L-layer message passing GNN, $GNN_{\cG}$ .  
	Following \citep{galkin2023towards}, 
	$\Ultra$ conditioned ($r\vert q$) the embedding of relations $r$ on the query triple $q(h, ?).$  The input layer is initialized to $\Enc{r}{q}{0} = \delta_{r=q}\mf{1}^{d_0},$ where $\delta_{r=q} = 1$ if $r = q,$ and $0$ otherwise.  
	The following iterative process defines how the upcoming layers compute the embeddings 
	\small{
		\begin{align*}
			\Enc{r}{q}{t+1} =& \rmm{UP}\big(\Enc{r}{q}{t}, \rmm{AGG}\big[ \big\{\\
			&\rmm{MSG}\big(\big\{\big(\Enc{r'}{q}{t}, \mf{u_{\smile}v_z}\big) 
			\vert r' \in \mathcal{N}(\rmm{u_{\smile}v_z, r}), u_{\smile}v_z \in \cV\big\}\big)\\
			&\big\}\big]\big)
		\end{align*}
	}
	so that $\EncR{q} = GNN_{\cG}\left(\Theta_{u}, \Theta_{a}, \Theta_{m}, q, \fR \right)\in \R^{\vert \cR\vert\times d_L}$
	is the conditional relation embedding matrix of all relations in $\fE.$  $\rmm{UP, AGG}$ and $\rmm{MSG}$ are \emph{update, aggregation, and message passing functions} and $\Theta_{\rmm{x}}, \rmm{x = u, a, m}$ are their respective parameters.

	\textbf{Entity Embedding.} 
	Entities are first initialized conditionally to the query $q(h, ?)$ and the relation embedding $\mf{q},$ a vector column of $\EncR{q}$. We iteratively embed entities as follows
		\begin{align*}
			\Enc{e}{h, q}{0} =& \delta_{e = h}\mf{q}\\
			\Enc{e}{h, q}{t+1} =& \rmm{UP}\big(\Enc{e}{h, q}{t}, \rmm{AGG}\big[ \big\{\\
			& \rmm{MSG}\big(\big\{\big(\Enc{e'}{h, q}{t}, \rmm{f}^t(\mf{q})\big)
			\vert e' \in \mathcal{N}(r, e), r \in \cR\big\}\big)\big\}\big]\big)\\
			\pi\left(h, q, e\right) =& \mf{w}^{\top}\big(\mf{W}^{L'}\Enc{e}{h, q}{L'} + \mf{b}^{L'}\big) + b.
		\end{align*}
	Motivated by the ability of geometric KGEs to capture complex relational patterns through algebraic transformations, the message-passing function is enriched with non-linear layer-specific relational transformations $\rmm{f}^t$. 
	The transformations, $\rmm{f}^t(\mf{q}) = \mf{W}_2^t\rmm{ReLU}\left(\mf{W}_1^t\mf{q} + \mf{b}^t_1\right) + \mf{b}^t_2$  are 2-layer 
	perceptrons with the $\rmm{ReLU}$ activation function; $\mf{W}$ are matrices, $\mf{b}$ and $\mf{w}$ are vectors, and $b$ is a scalar. 
	The update functions consist of a linear transformation followed by a normalization layer, while aggregation is performed through summation. $\pi(h, q, e)$ is the score of the triple $q(h, e)$. 
	The initialization of relations to vectors of ones, $\mathbf{1}^{d_L}$, or zeros, $\mathbf{0}^{d_L}$, and entities to $\mathbf{q}$ or zero vectors, makes the architecture of $\Ultra$ generalizable to unseen relations and entities during inference. 
	$\Ultra$ uses the binary cross entropy (BCE) loss,
	{\small
		\begin{align*}
			\mathcal{L}_{\rmm{BCE}} = - \frac{1}{\vert \cT_+\vert}\sum_{\tau \in \cT_+}\BRac{ & \log\pi(\tau) 
				+ \frac{1}{n(\tau)}\sum_{i=1}^{n(\tau)}\log\left(1 - \pi(\tau'_i) \right)}
		\end{align*}
	} 
	to measure the difference between predicted probabilities and triple plausibilities. The BCE loss penalizes high score for true triples, $\tau,$ low score for corrupted triples, $\tau'_i$. 
	
	\subsection{Comparing $\Ultra$ and Motif}
	The KGFM $\rmm{Motif}$ uses a variety of motifs (n-ary orders) to construct a relation hypergraph. 
	The experiments in ~\citep{huang2025expressive} are conducted on 2-path, 3-path, and k-star motifs, denoted by $\mc{F}^{path}_k$ and $\mc{F}^{star}_k,$ respectively. 
	It can be observed that $\Motif$ is unable to discriminate between closed and open paths, unlike $\Ultra$. The second significant distinction derived from the orders. 
	To explain this, let us consider the motifs arising from their respective 3-path based structural vocabulary.  The 3-aries in $\Motif$ are named and are equivalent to ours as follows: $\rmm{tfh \sim  fff},$ $\rmm{tft \sim ffr},$ $\rmm{hfh \sim frf}$ and $\rmm{hft \sim rff}.$  As the IKG in Figure~\ref{fig:toyKG} is a directed acyclic graph, the $\rmm{u_{\smile}v}$ are equivalent to $\rmm{u_{\smile}v_o}$. Figure~\ref{fig:relation graph for toy KG} is therefore built using the latter.
	From Figure~\ref{fig:toyKG}, $r_1, r_2$ and $r_3$ are linked by the motif $\rmm{tfh}$ and $(a, r_1, b, r_2, c, r_3, d)$ is the only element in the equivalence class $\rmm{tfh}(r_1, r_2, r_3).$ Similarly $\rmm{tfh}(r_1, r_4, r_3)$ and $\rmm{tfh}(r_1, r_5, r_3)$ are singletons. However, the equivalence class $\rmm{fff_o}(r_1, r_3)$ is the union of $\rmm{tfh}(r_1, r_i, r_3), i=2, 4, 5,$ this is to say $\omega\left( \rmm{fff_o}\left(r_1, r_2, r_3 \right)\right) = \sum_i\omega\left( \rmm{tfh}\left(r_1, r_i, r_3 \right)\right).$  In general, the weights of the equivalence classes induced by $\Ultra$'s 3-path motifs are higher than the $\Motif$'s ones. 
	\begin{theorem}\label{theo:robustness}
		Let $\rho$ be a monomorphism from $\cP_3$ to a graph $K.$ If $\rmm{uvw_o\circ \rho}$ is an $\epsilon$-edge and its corresponding motif in $\Motif$'s vocabulary is an $\epsilon'$-edge, then $\epsilon'\leq \epsilon.$ 
	\end{theorem}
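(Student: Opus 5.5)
The idea is to exhibit the equivalence class of the Motif occurrence as a subset of the equivalence class of the corresponding $\Ultra$ occurrence. Weights are cardinalities of equivalence classes (Definition~\ref{def:struct-vocab}), so set inclusion gives the inequality of weights directly. This is the general form of the toy computation above, where $\omega(\rmm{fff_o}(r_1,r_3))=\sum_i\omega(\rmm{tfh}(r_1,r_i,r_3))$.

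First I fix the data. Let $\rho$ be the monomorphism of $\cP_3$ into $K$, with image relations $\rho(r_1),\rho(r_2),\rho(r_3)$. The positional binary order $\rmm{uvw_o}=\mgg_{1,3\leq 3}$ keeps only the first and third arguments. The corresponding Motif order is the full ternary order $\mgg_{1,2,3}$ on the same graphlet $G$ (via the dictionary $\rmm{tfh\sim fff}$, $\rmm{tft\sim ffr}$, $\rmm{hfh\sim frf}$, $\rmm{hft\sim rff}$). By Definition~\ref{def:graphlet occurrence}, the two classes are
\[
\epsilon=\left|\set{\phi'(G)\,\vert\,\rho'(r_1)=\rho(r_1),\ \rho'(r_3)=\rho(r_3)}\right|
\]
and
\[
\epsilon'=\left|\set{\phi'(G)\,\vert\,\rho'(r_i)=\rho(r_i),\ i=1,2,3}\right|,
\]
where $\phi'$ ranges over order-preserving monomorphisms $G\xrightarrow{mon}K$ realising the same open 3-path shape.

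The key step is the inclusion of the second set in the first. Any occurrence equivalent under the ternary order agrees with $\rho$ on all three positions, so in particular it agrees on positions $1$ and $3$. This is exactly the content of Theorem~\ref{theo:pos order spans n-ary}: the positional binary class is the union, over the middle relation $r'\in\cR^K$, of the ternary classes $\mgg_{1,2,3}(\rho(r_1),r',\rho(r_3))$. These ternary classes are pairwise disjoint, because they differ in the image of $r_2$. Hence
\[
\epsilon=\sum_{r'}\omega\big(\mgg_{1,2,3}(\rho(r_1),r',\rho(r_3))\big)\geq\epsilon',
\]
since the term with $r'=\rho(r_2)$ contributes $\epsilon'$ and all other terms are nonnegative.

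The main obstacle is the bookkeeping between the two formalisms. Two points need care. First, Motif's motif class must count the same objects, namely occurrences $\phi(G)$ and not, say, entity tuples. Second, the $\rmm{uvw_o}$ class must contain only open occurrences, whereas Motif makes no open/closed distinction. For the second point, I would argue that Motif's count for a given motif cannot exceed the count of its open instances. Otherwise I would restrict the comparison to the open instances, as the statement's use of $\rmm{uvw_o}$ and the acyclic setting of Figure~\ref{fig:toyKG} suggest. Under the natural reading, a Motif motif over $\cP_3$ corresponds to the same graphlet $G$ with the full ternary order, and with that reading the inclusion argument above goes through directly.
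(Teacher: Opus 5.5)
Your proposal is correct and is essentially the paper's proof: the paper also uses Theorem~\ref{theo:pos order spans n-ary} to view $\rmm{uvw_o}=\mgg_{1,3}$ as spanning a family of ternary orders $\mgg^{(j)}$ that contains the corresponding motif, notes that every occurrence in a ternary class lies in the positional-binary class, and concludes $\epsilon=\sum_j\omega\big(\mgg^{(j)}\circ\rho(\cR)\big)\geq\epsilon'$, which is your sum over the middle relation $r'$ with the disjointness left implicit. Your open/closed caveat is apt, since the paper silently adopts exactly your ``natural reading'' (Motif's motif is the same monomorphically occurring open-path graphlet); just drop the fallback claim that Motif's count cannot exceed its open count, which you cannot prove because SPMM-style counts also include closed walks that the $\rmm{uvw_o}$ class excludes.
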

	
	Theorem~\ref{theo:robustness} states that if no edge exists between two relations in the $\Ultra$ relation graph, then they are not connected by a hyper-edge in the $\Motif$ relation hypergraph. 
	This proves the robustness of $\Ultra$. Furthermore, the theorem demonstrates that $\Ultra$ is computationally less demanding than $\Motif.$ This difference in computation appears in the $\rmm{GNN}_{\cG}$'s message function.  
	In order to clarify this statement, let us consider the neighborhoods of $r_3$ in both relation graphs. $\Ultra$ returns $\mc{N}(\rmm{fff_o}, r_3) = \set{r_1}$ while $\Motif$ returns $\mc{N}^1(\rmm{tfh}, r_3) =\emptyset, ~\mc{N}^2(\rmm{tfh}, r_3) =\emptyset$ and $\mc{N}^3(\rmm{tfh}, r_3) = \set{r_1, r_2, r_4, r_5};$ where the upper script $^i$ means $r_3$ appears at the $i$'th position in the hyperedge $\rmm{tfh}$. 
	In comparison to $\mc{N}(\rmm{fff_o}, r_3)$, operations over $\mc{N}^3(\rmm{tfh}, r_3)$ result in an increase in compute time. 
	The choice of relation embedding GNN also contributes to the increases of computing time. The HCNets used by $\Motif$ genuinely involves more computation, as it employs a learnable query vector and a sinusoidal positional encoding for each query relation~$q$. 
	
	\begin{figure}
		\centering
		\begin{subfigure}{0.45\textwidth}
			\centering
			\begin{tikzpicture}[
				node distance=.95cm,
				every node/.style={},
				->, >=Stealth, thick,
				bend angle=10
				]
				\node (e1) {$a$};
				\node[right of=e1] (e2) {$b$};
				\node[right of=e2] (e3) {$c$};
				\node[right of=e3] (e4) {$d$};
				
				\node[below of=e2] (e6) {$e$};
				\node[below of=e3] (e5) {$f$};
				\node[below of=e1] (e7) {$g$};
				
				\draw (e1) edge[bend left] node[above] {$r_1$} (e2);
				\draw (e2) edge[bend left] node[above] {$r_2$} (e3);
				\draw (e3) edge[bend left] node[above] {$r_3$} (e4);
				\draw (e5) edge[bend right] node[left] {$r_3$} (e4);
				
				\draw (e1) edge node[above, right] {$r_1$} (e6);
				\draw (e6) edge node[above] {$r_5$} (e5);
				
				\draw (e1) edge[bend right] node[above left] {$r_1$} (e7);
				\draw (e7) edge[bend right=20] node[below] {$r_4$} (e5);
			\end{tikzpicture}
			\caption{Illustrative Toy KG (IKG)}
			\label{fig:toyKG}
		\end{subfigure}
		\hfill
		\begin{subfigure}{0.45\textwidth}
			\centering
			\begin{tikzpicture}[
				node distance=.95cm,
				every node/.style={},
				->, >=Stealth, thick,
				bend angle=10
				]
				\node (r1) {$r_1$};
				\node[left of=r1] (r5) {$r_5$};
				\node[below of=r5] (r4) {$r_4$};
				
				\node[right of=r1] (r2) {$r_2$};
				\node[right of=r2] (r6) {};
				\node[below of=r2] (r3) {$r_3$};
				
				\draw (r1) edge[bend right] node[below, left] {$\rmm{ff_o}$} (r4);
				\draw (r1) edge[bend right] node[above] {$\rmm{ff_o}$} (r5);
				\draw (r1) edge[bend left] node[above] {$\rmm{ff_o}$} (r2);
				\draw[Goldenrod!80!black] (r1) edge node[sloped, below] {\textcolor{Goldenrod!80!black}{$\rmm{fff_o}$}} (r3); 
				\draw (r2) edge[bend left] node[above, right] {$\rmm{ff_o}$} (r3);
				\draw (r4) edge[bend left] node[below] {$\rmm{ff_o}$} (r3);
			\end{tikzpicture}
			\caption{Relation graph of the IKG}
			\label{fig:relation graph for toy KG}
		\end{subfigure}
		\caption{(a) A toy Knowledge Graph (IKG) with five relations and seven entities, illustrating the underlying relational structure. (b) The corresponding relation graph constructed from the structural vocabulary of open paths $\{\rm{ff_o}, \textcolor{Goldenrod!80!black}{\rm{fff_o}}\}$, where relations are nodes and edges capture their co-occurrence within paths.}
		\label{fig:toy-graph}
	\end{figure}

	\section{Experiments and Results}
	In our experiments, we aim to answer the following research questions:
	
	\textbf{(RQ1)}  Can the scaling behavior of recent GNN based graph foundation models be improved with the proposed extension?
	\newline 
	\textbf{(RQ2)} Does the zero shot performance increase with increasing vocabulary?  
	\newline
	\textbf{(RQ3)}  Can the addition of specific topological graphlets (e.g., N-M graphlets) help link prediction for containing N-M relations? 
	\newline
	\textbf{(RQ4)} Does enriching vocabulary with closed paths improve model performance? 
	\newline
	\textbf{(RQ5)} Does constructing a relation graph 
	with binary meta-relations offer advantages over using ternary meta-relations?
	
	\tabcolsep=0.143cm
	\begin{table*}[ht!]
		\centering
		\caption{Average zero-shot link prediction MRR and H\@10 over 51 KGs. Baseline results are taken from~\citep{huang2025expressive}. $\cP_n,$ O and C stand for n-, open, and closed paths; and N-M stands for many-to-many subgraphs} 
		\begin{tabular}{l|lll|llllll|| ll}
			\toprule
			\multirow{3}{*}{Model} & \multicolumn{3}{c|}{\multirow{2}{*}{Structural Vocabulary}} & \multicolumn{2}{c}{Ind.$(e)$
			} & \multicolumn{2}{c}{Ind.$(e,r)$} & \multicolumn{2}{c}{Transd.} & \multicolumn{2}{c}{Total Avg.} \\
			&&&& \multicolumn{2}{c}{ (18 KGs)} & \multicolumn{2}{c}{ (23 KGs)} & \multicolumn{2}{c}{(10 KGs)} & \multicolumn{2}{c}{(51 KGs)}\\
			\cline{2-12}
			&$\cV$& Definition & $\# \cV$ & MRR & H\@10 & MRR & H\@10& MRR & H\@10 & MRR & H\@10\\
			\midrule
			\multirow{5}{*}{$\Ultra$}&$\cV_{2}^{-}$&O. $\cP_2$ & $4$ &$.388$ & $.551$ & $.323$ & $.498$&$.338$& $.498$ &$.349$ &$.516$\\
			
			&$\cU_{2}$ &$\cP_2$ & $4$ & $ .425 $&$ .567$&$.350$&$.515$&$.343$&$.499$ &$.375$ &$.530$\\
			
			&$\cV_2$&O.~\&~C.~$\cP_2$ & $ 8$ & \cellcolor{Goldenrod!25}$\underline{.441}$ & $.579$ & $.354$ & \cellcolor{Goldenrod!25}$\underline{.533}$&\cellcolor{Goldenrod!25}$\underline{.349}$& \cellcolor{Goldenrod!25}$\underline{.509}$ & \cellcolor{Goldenrod!25}$\underline{.384}$ &\cellcolor{Goldenrod!25}$\underline{.544}$\\
			
			&$ \cV_2^+$ &O.~\&~C.~$\cP_2$~\&~N-M& $16$ & $.415$ & \cellcolor{RoyalBlue!25}$\textbf{.582}$ & $.349$ & $.525$ & $.347$ & $.504 $ &$.372$ &$.541$\\
			
			&$ \cV_3^-$ &O. $\cP_3$& $16$ & $.423 $ & $.561$ & $.337$ & $.510$ & $.346$ & $.496$ & $.369$ & $.525$\\
			
			& $\cV_3$ &O.~\&~C.~$\cP_3$ & $24$ & \cellcolor{RoyalBlue!25}$\textbf{.445}$ & \cellcolor{Goldenrod!25}$\underline{.581}$ & \cellcolor{Goldenrod!25}$\underline{.355}$ & \cellcolor{RoyalBlue!25}$\textbf{.542}$ & \cellcolor{RoyalBlue!25}$\textbf{.355}$& \cellcolor{RoyalBlue!25}$\textbf{.511}$ & \cellcolor{RoyalBlue!25}$\textbf{.387}$ & \cellcolor{RoyalBlue!25}$\textbf{.549}$\\
			
			&$\cV_3^+$ &O.~\&~C.~$\cP_3$~\&~N-M & $32$ & $.435$ & \cellcolor{Goldenrod!25}$\underline{.581}$ & \cellcolor{RoyalBlue!25}$\textbf{.356}$ & $ .532$ & $.345$ & $.499$ & $.382$ &$.543$\\
			
			\hline
			$\rmm{Ultra}$ & $\cU_{2}$&$\cP_2$ & $4$ & $.431$ & $.566$ & $.345$ & $.513$ & $.339$ & $.494$ & $.374$ & $.529$\\ 
			$\Motif$ & $\cU_{3}$ &$\cP_3$ & $12$ & $.436$ & $.577$ & $.349$ & $.525$ & $.343$ & $.496$ & $.378$ & $.537$ \\
			\bottomrule
		\end{tabular}
		\label{tab:all results}
	\end{table*}

	\begin{table*}[h!]
		\centering
		\caption{Comparing $\rm{Ultra}, \Motif$ and $\Ultra$ on 5 transductive sparse datasets. Baseline results are taken from \citep{huang2025expressive}.}
		\label{tab:robustness}
		\begin{tabular}{l| cc cc cc cc}
			\toprule
			Model & \multicolumn{2}{c}{$\rm{Ultra}$} & \multicolumn{2}{c}{$\Motif$} & \multicolumn{2}{c}{$\Ultrav{2}$}& \multicolumn{2}{c}{$\Ultrav{3}$} \\
			\midrule
			\textbf{Dataset} & \textbf{MRR} & \textbf{H10} & \textbf{MRR} & \textbf{H10} & \textbf{MRR} & \textbf{H10} & \textbf{MRR} & \textbf{H10}\\
			\midrule
			{WDsinger} & $.382$ & $.498$ & \cellcolor{Goldenrod!25}$\underline{.397}$ & \cellcolor{RoyalBlue!25}$\textbf{.514}$  & \cellcolor{RoyalBlue!25}$\textbf{.402}$ & $.505$  & \cellcolor{RoyalBlue!25}$\textbf{.402}$ & \cellcolor{Goldenrod!25}$\underline{.511}$ \\
			{NELL23k} & $.239$ & $.408$ & $.220$ & $.384$ & \cellcolor{Goldenrod!25}$\underline{.249}$ & \cellcolor{Goldenrod!25}$\underline{.413}$ & \cellcolor{RoyalBlue!25}$\textbf{.250}$ & \cellcolor{RoyalBlue!25}$\textbf{.419}$ \\
			{FB15k237(10)} & \cellcolor{Goldenrod!25}$\underline{.248}$ & $.398$ & $.236$ & $.384$ & \cellcolor{RoyalBlue!25}$\textbf{.249}$ & \cellcolor{RoyalBlue!25}$\textbf{.404}$ & $.245$ & \cellcolor{Goldenrod!25}$\underline{.400}$  \\
			{FB15k237(20)} & \cellcolor{Goldenrod!25}$\underline{.272}$ & \cellcolor{Goldenrod!25}$\underline{.436}$ & $.259$ & $.422$ & \cellcolor{RoyalBlue!25}$\textbf{.274}$ & \cellcolor{RoyalBlue!25}$\textbf{.439}$ & $.268$ & $.431$\\
			{FB15k237(50)} & $.324$ & \cellcolor{Goldenrod!25}$\underline{.526}$ & $.312$ & $.508$ &\cellcolor{RoyalBlue!25}$\textbf{.329}$ & \cellcolor{RoyalBlue!25}$\textbf{.527}$  & \cellcolor{Goldenrod!25}$\underline{.326}$ & $.524$ \\
			\bottomrule
		\end{tabular}
	\end{table*}

	\paragraph{Benchmarks and Pattern-Matched Relation Graphs.} 
	In our experiments, we assess the essence of $\Ultra$ using 57 KGs with various characteristics. We grouped these KGs according to entity learning (18 KGS), graph transfer (23 KGs), 
	and transductive learning (16 KGs) tasks. Additional information about the KGs in each group is included in Appendix \ref{sec:datasets}.
	$\Ultra$ is pretrained on the CoDEx Medium, FB15k237, and WN18RR KGs, and subsequently assessed in the ZSLP tasks on the remaining 51 KGs.

	\cite{galkin2023towards} and \cite{huang2025expressive} employ sparse matrix multiplication of the (multi-relational) adjacency matrix $\A \in \R^{n\times m\times n}$ and matrices representing head-relation pair 
	$\E_h \in \R^{n \times m}$ and tail-relation pair $\E_t\in\R^{n\times m}$. 
	Multiplying $\E_{\rmm{x}}^T$ by $\E_{\rmm{y}}$ results into the adjacency matrix of the $\rmm{x2y}$ meta-relations used in $\rmm{Ultra}.$ 
	This method provides additional information on the number of occurrences of the respective graphlet in the whole dataset. 
	However, this information is left unused, as only the connection information is represented in the relation KG in Ultra and the relation hypergraph in Motif, respectively. 
	As we also distinguish between closed and open path,  computation via the adjacency matrix becomes computationally expensive. Pattern matching, on the other hand, can be used in a highly parallel fashion to compute the relation graph of any KG. 
	To obtain the relation graph we construct a SPARQL ask query for each element in the vocabulary (see Appendix \ref{sec:sparql_queries} for the exact Queries), which can be run on any rdf KG. 
	This method enables the computation of relation graphs based on vocabularies containing arbitrary graphlets. 
	
	\paragraph{Evaluation Protocol.} We follow the best practices for evaluating KGE models by considering the Mean Reciprocal Rank (MRR), and the Hits at n (Hn, n = 1,3,10) metrics. 
	Link prediction consists of finding  the missing entity $?$ in the queries $Q=r(h, ?)$ or $ Q'= r(?, t)$. First, we created a symbolic inverse relation $r',$ which turns queries with a missing head into $Q = r'(t, ?).$ This means that we only look at queries that are in the form of $Q.$ 
	Next, $\Ultra$ scores and ranks the corrupted triples in a decreasing order. 
	The predicted missing entity is the top ranked corrupted entity. 
	We compare our models against the state-of-the-art KGFMs $\rmm{Ultra}$ and $\Motif$ using the aforementioned evaluation metrics.  We consider six different vocabularies to design our models; namely, $\cU_2,$ $\cV^-_j = \cV_j\setminus \set{\rmm{u_{\smile}v_c}},$ $ \cV_j,$ $\cV^+_j = \cV_j\cup \mc{M}'_4, j=2,3$ . We denote the $\Ultra$ variant built on the vocabulary $\mc{X}_{\bullet}^{\pm}$ by $\Ultra[\mc{X}_{\bullet}^{\pm}]$.
	
	\paragraph{Results.} The experimental results of evaluating the pretrained $\Ultra$ on the benchmark KGs are reported in Table~\ref{tab:all results}. In the following, the operator  $\geq$ relating two models means that the first model \emph{outperforms} the second model.

	\begin{figure*}[!ht]
		\begin{center}
			\includegraphics[width=0.9\linewidth]{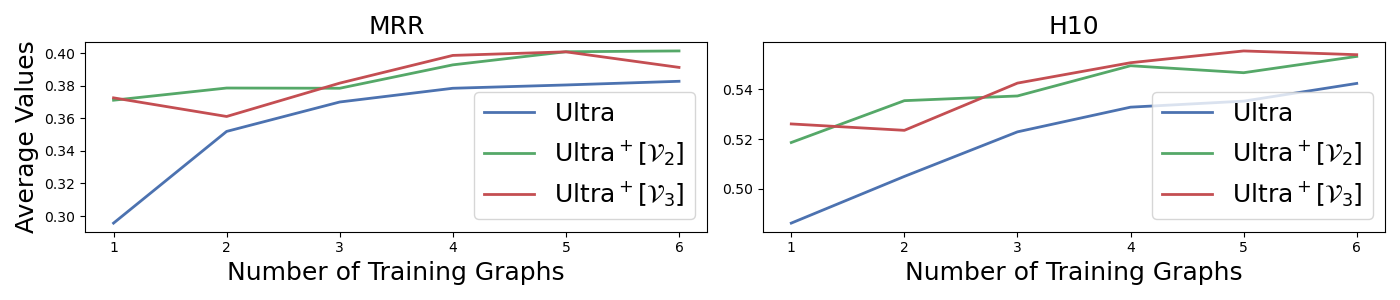}
		\end{center}
		\caption{Average Performance  over 51 Graphs of Ultra and $\Ultra$ models pretrained on an increasing number of Graphs. }
		\label{fig:ultra}
	\end{figure*}
	
	{\it \textbf{General Overview:}} 
	Figure~\ref{fig:ultra} reports the average performance of $\rm{Ultra}$ and $\Ultra$ as the number of pretraining KGs increases. (\textbf{RQ1}) $\Ultra$'s variants consistently outperform $\rm{Ultra}$, demonstrating both scalability and performance gains. While $\Ultrav{2}$ improves monotonically before saturating, $\Ultrav{3}$ fluctuates with the addition of WN18RR and ConceptNet100k at position 2 and 6 respectively (see Table~\ref{tab:pretraining_mixtures} for more details), highlighting the need for careful selection and ordering of pretraining KGs given their structural heterogeneity.  
	Observing the path- and topology- based vocabulary variants, we notice that $\Ultrav{3} $ $\geq$ $\Ultrav{2}$ $\geq$ $\Ultrau{2}$ $\geq$ $\Ultram{2}$ 
	on average for relation learning, graph transfer, and transductive inference tasks. 
	This trend in performance can be related to the inclusion of the vocabularies $\cV^-_2 \subset \cU_2 \subset \cV_2 \subset \cV_3.$
	However, we have found that combining the path-based and topology-based vocabularies does not result in an increase in performance as $\Ultrav{3}$ and $\Ultrav{2}$ often surpass  $\Ultrap{3}$ and $\Ultrap{2}$ respectively. 
	\textbf{(RQ2)} On one hand, we can conclude that the zero-shot performance increases with increasing the path-based vocabulary. 
	\textbf{(RQ3)} On the other hand, mixing the topology-based with the path-based vocabulary does not necessarily preserve the performance increase. 
	
	The $\Motif$ model maintains its superiority over $\rmm{Ultra}$ for all tasks. 
	This  difference in performance is a consequence of adding 3-path graphlets to $\rmm{Ultra}$'s vocabulary. Although $\Ultrav{2}$ utilized only the 2-path based vocabulary $\cV_2$, it notably outperforms $\Motif$ and $\rmm{Ultra}$ on {both} inductive {and transductive} link prediction. \textbf{(RQ4)} This clearly demonstrates the importance of having a vocabulary rich enough to convey information about closed and open paths. 
	$\Ultrav{3}$ is the best performing variant of the $\Ultra$ models across all the settings. It uses the same vocabulary as $\Motif$, except that its graphlets are positional binary orders. \textbf{(RQ5)} Thus, its superiority over $\Motif$ lies in the arity of the graphlets. 
	\textit{ \textbf{ On the robustness of $\Ultra$:}} In all three models: $\Ultra$, Ultra, and Motif, two relations are connected in the relation graph as soon as they co-occur at least once in the KG.
	For $\Ultra$, this corresponds to observing a single match of the associated SPARQL query pattern in the KG; for Ultra and Motif, it corresponds to the relevant entry of the adjacency matrix (obtained via sparse matrix multiplication) becoming non-zero. In either case, the frequency of co-occurrence does not influence whether an edge is created. 
	However, because $\Ultra$ employs positional binary orders, this insensitivity to frequency is implicitly mitigated in its relation graph, as formalized in Theorem~\ref{theo:robustness}. 
	Empirically, sparse KGs provide a natural setting for assessing the robustness of KGFMs that construct relation graphs. In our experiments, WDsinger, NELL23k, and FB15k237(10/20/50) constitute such sparse knowledge graphs. 
	Table ~\ref{tab:robustness} confirms that U$\Ultra$ is superior to the baseline models when it comes to sparse KGs.

	\section{Conclusion}
	We proposed a KGFM framework called $\Ultra$ 
	capable of constructing a relation graph from any structural vocabulary composed of a set of graphlets. 
	This framework enables the conversion of n-ary graphlets' orders into positional binary orders, thereby maintaining pairwise relational semantics and mitigating the complexity associated with hypergraphs. 
	Using SPARQL to run ASK queries simplifies the distinction between open and closed paths when mining graphlets, and overcomes the major limitation of computing relation graphs when higher-order graphlets involve the full adjacency matrix.  
	Our theoretical findings, described in Theorems~\ref{theo:pos order spans n-ary} and ~\ref{theo:robustness}, demonstrate that $\Ultra$ exhibits greater robustness compared to the current baseline KGFMs. Evaluation of ZSLP tasks, with $\Ultra$ pretrained on three KGs, indicated that an increase in structural vocabulary is advantageous when only path-based vocabulary is utilized, yet it becomes detrimental when combining path- and topology-based vocabularies. 
	We showed that enhancing the model's awareness of relational patterns and topological patterns significantly improves the model's MRR and H10, respectively.
	
	
	Our Model $\Ultrav{3}$ achieves state-of-the-art performance in ZSLP averaged across 51 datasets with only 3 Graphs used for pretraining. Our investigation also shows that scaling pretraining has the chance to further improve performance. The case for scaling the vocabulary, on the other hand, remains ambiguous. We observed an increase in performance for increasing path based vocabulary, while adding structurally inspired graphlets seems to be detrimental. A large scale investigation of higher order structural vocabularies remains challenging, due to the computational complexity of relation graph computation for vocabularies containing complex graphlets. We will address efficient computation of relation graphs that go beyond instance based computation (where existence of a single instance of a grpahlet results in a connection in the relation graph) in future research.


	\section*{Impact Statement}
	This paper presents work whose goal is to advance the field of Machine Learning. There are many potential societal consequences of our work, none of which we feel must be specifically highlighted here.
	
	\bibliography{references-add}
	\bibliographystyle{plainnat}
	
	\newpage
	\appendix
	\onecolumn
	\section{Proof of the Theorems\label{appendix:proofs}}
In this section, we provide the proofs for the theorems stated in the paper. Before we begin, let us summarize the key mathematical symbols and their meanings used throughout the paper in Table~\ref{tab:Notation table}
\begin{table}[!h]
\centering
    \caption{Notation Table.}
    \label{tab:Notation table}
    \begin{tabular}{ll}
         \toprule
         Symbol & Meaning\\
         \midrule
        $\mathcal{E}, \mathcal{R}$ & Entity and relation sets\\  
        $\mathcal{T}_{ , +,-}$  &  Set of all plausible, true ($_+$), and corrupted ($_-$) triples\\
        $K=(\mathcal{E}^K, \mathcal{R}^K, \mathcal{T}^K_+)$ & Knowledge Graph \\
        $\phi, \rho, \eta$ & KG, relation, and entity homomorphism \\
        \multirow{2}{*}{$\phi_{\mgg}=(\rho, \eta): G\xrightarrow{mon} K$}& KG monomorphism with \\
        & $\rho: \cR \to \cR^K, \eta:\cE \to \cE^K$ two injective maps \\
        $card(\cG) = \norm{\cT}$ & Cardinality of the graphlet $\cG$\\
        $\cG, G, \mgg $ & Graphlet, small KG, and order on $\cR$ \\
        $\mgg(\cR)$ & A tuple defined by the order $\mgg$ on $\cR$  \\
        $\mgg_{i_1, \ldots, i_{m\leq n}}$ & (if $m<n$) positional m-, (else) n-ary order   \\
        $\mgg(\cR^K) = \set{\mgg\circ\rho (\cR) \vert \rho:\cR \xrightarrow{mon}\cR^K}$ & Ordered n-aries (of relations in $\cR^K$) induced by $\mgg$\\
        $\phi_{\mgg}(G), \cG(K)$ & Occurrence, set of occurrences of $\cG$ in $K$ \\
        $\overline{\rmm{X}} = \set{\phi'(G)\vert \phi'(X)\equiv_{\mgg}\phi(X) }$& Equivalence class  \\
        \multirow{2}{*}{$(\cV, \omega)$}& Structural vocabulary, $\cV =(\cG_i, \rmm{g}_i)$ a set of graphlets, \\
        & $\omega(\overline{\phi(G_i)})$ a weighting function\\
        $\rmm{f, r,\ _o,\_ c}$ & Forward, backward/reversed, open, and closed path \\
        $_{\smile}\in \set{\rmm{f, r}}^{i}$ & A sequence of length $i$ over the alphabets $\rmm{f, r}.$\\
        $u_{\smile} v = \rmm{g}_{1, n}$ & A Positional 2-ary defined by the first and last position\\
        $\cP_p$& Set of paths of length $p$\\
        O., C., O. \& C. $\cP_p$ & Open, closed, and open or closed paths of length $p.$\\
        $\cV_p $ = $ \big\{\rmm{u_{\smile} v_z}\vert \rmm{u, v}\in \set{\rmm{f, r}},$ $\rmm{z\in\set{o, c}}\big\}$ & (O., C.) $\cP_p$-based vocabulary\\
        $\cU_p $ = $ \big\{\rmm{u_{\smile} v}\vert \rmm{u, v}\in \set{\rmm{f, r}},$ $\big\}$ & $\cP_p$-based vocabulary\\
        N-M & Many to many\\
        $\mathcal{M}_{ij} = \set{\rmm{uv}_{i:j}\mid \rmm{u, v} \in \set{\rmm{f, r}}}$ & N-M subgraphs with $i, j$ number of $\rmm{u}$ and $\rmm{v}$ edges resp.\\
        $\mathcal{M}_{m} = \set{\mathcal{M}_{i, j}\mid i + j = m}$ & m-star\\
        $\cV_{p}^{-}$ & O. $\cP_{p}$-based vocabulary\\
        $\cV_p^{+}$ & O. \& C. $\cP_p$- and $\mathcal{M}_4$-based vocabulary\\
        $\mc{N}^{i}(\rmm{g}, r)$ & Neighborhood of $r$ (the $i$th node) relative to the n-ary $\rmm{g}$\\
        $\Ultra[\mc{X}_{p}^{\pm}]$ & $\Ultra$ variant built on the vocabulary $\mc{X}_{p}^{\pm}$
    \end{tabular}
\end{table}

\subsection{Proof of Theorem~\ref{theo:pos order spans n-ary}\label{appendix:theo-pos order spans n-ary}}
Any positional m-ary order, $m < n,$ spans a group of n-ary orders.  

\begin{proof}
    Consider two positive integers, $m$ and $n$, where $m < n$. Let $\mgg^{} = \mgg_{i_1, \ldots, i_m \leq n}$ represents a positional m-ary. According to Definition~\ref{def:graphlet}, $\mgg$ is a function with $n$ arguments, out of which $n-m$ are dummy arguments. Define $\pi(i_k)$ as the position of $i_k$ within the ordered list of all $n$ arguments. We now consider the family of n-ary $\mgg^{(j)} = \mgg_{j_1, \ldots, j_n}$ such that if $\pi(j_l) = \pi(i_k)$, then $j_l = i_k$ for $k=1, \ldots, m$ and $l=1, \ldots, n$. Consequently, $\mgg$ is induced by any of the n-ary $\mgg^{(j)}$. In other words, the m-ary spans the aforementioned family of n-aries.
\end{proof}

\subsection{Proof of Theorem~\ref{theo:robustness}\label{appendix:theo-robustness}}
Let $\rho$ be a monomorphism from $\cP_3$ to a graph $K.$ If $\rmm{uvw_o\circ \rho}$ is an $\epsilon$-edge and its corresponding motif in $\Motif$'s vocabulary is an $\epsilon'$-edge, then $\epsilon'\leq \epsilon.$

\begin{proof}
    Let $\rho$ be a monomorphism from $\cP_3$ to a graph $K.$ $\rmm{uvw_o}$ is a positional binary order whose second argument is the dummy argument; i.e. $\rmm{uvw_o} = \mgg_{1,3}.$
    It follows from Theorem~\ref{theo:pos order spans n-ary} that $\rmm{uvw_o}$ spans a group of 3-ary, $\mgg^{(j)},$ including it corresponding motif $\mu.$ 
    All triples in the equivalence classes $\mgg^{(j)}\circ \rho(\cR)$ are also in $\rmm{uvw_o\circ \rho(\cR)},$ so that $\epsilon $ = $ \omega\left(\rmm{uvw_o}\circ \rho(\cR)\right) $ = $ \sum_j\omega\left(\mgg^{(j)}\circ \rho(\cR)\right) $ $\geq$ $ \mu\circ \rho(\cR) $=$ \epsilon'$
\end{proof}

\subsection{Expressiveness Limitation of Motif Augmented with a Closed Path Compare to $\Ultra$}

\definecolor{darkyellow}{RGB}{204,153,0}
\begin{figure}
\centering
\begin{subfigure}{0.25\textwidth}
\centering
\begin{tikzpicture}[
  node distance=.95cm,
  every node/.style={},
  ->, >=Stealth, thick,
  bend angle=10
]
\node (e1) {$\bullet$};
\node[left of=e1] (e2) {$\bullet$};
\node[below of=e1] (e3) {$\bullet$};

\draw (e1) edge[bend right] node[above] {$r_1$} (e2);
\draw (e2) edge[bend right] node[below, left] {$r_2$} (e3);
\draw (e3) edge[bend right] node[below, right] {$r_3$} (e1);
\end{tikzpicture}
\caption{Cyclic KG}
\label{fig:Cycle-graph G}
\end{subfigure}
\hfill
\begin{subfigure}{0.25\textwidth}
\centering
\begin{tikzpicture}[
  node distance=.95cm,
  every node/.style={},
  ->, >=Stealth, thick,
  bend angle=10
]
\node (r1) {$r_1$};
\node[left of=r1] (r2) {$r_2$};
\node[below of=r1] (r3) {$r_3$};

\draw (r2) edge[bend left] node[above] {$\rmm{fff_c}$} (r1);
\draw (r3) edge[bend left] node[below, left] {$\rmm{fff_c}$} (r2);
\draw (r1) edge[bend left] node[below, right] {$\rmm{fff_c}$} (r3);
\end{tikzpicture}
\caption{ Relation (Binary)graph}
\label{fig:Cycle-graph Ultra}
\end{subfigure}
\hfill
\begin{subfigure}{0.25\textwidth}
\centering
\begin{tikzpicture}[
  node distance=.95cm,
  every node/.style={},
  ->, >=Stealth, thick,
  bend angle=10
]
\node (r11) {$r_1$};
\node[left of=r11] (r12) {$r_2$};
\node[below of=r11] (r13) {$r_3$};

\draw[thick, fill=darkyellow!30] (-0.3,-0.2) circle (1cm) node[above right, yshift=-0.4cm, xshift=-0.4cm] {$\rmm{tfh}$};
\draw (r12) node[above] {$\bullet$} (r11) node[below right] {$r_1$} (r11);
\draw (r13) node[below, left] {$\bullet$} (r12) node[above, right] {$r_2$} (r12);
\draw (r11) node[below, right] {$\bullet$} (r13) node[above] {$r_3$} (r13);
\end{tikzpicture}
\caption{Relation Hypergraph}
\label{fig:Cycle-graph Motif}
\end{subfigure}
\caption{Cyclic Knowledge Graph and Relation Graphs: (a) A cyclic knowledge graph with three relations. (b) $\Ultra $ constructs a relation graph consisting of three 2-ary edges, while (c) Motif constructs a relation hypergraph with a single 3-ary edge.}
\label{fig:Cycle-graph}
\end{figure}
One of $\Ultra$'s contributions is distinguishing between closed and open paths. Although the GNN architectures of $\Ultra$ and Motif are different, we are interested in finding out whether Motif augmented with closed paths is more expressive than $\Ultra.$ 
Let us assume that Motif is more expressive than $\Ultra.$ 
We will now consider the cyclic KG in Figure~\ref{fig:Cycle-graph} and its relation graph in the $\Ultra$ and Motif framework. As this graph is symmetric, the results are independent of the choice of query relation. 
Let $r_1$ be the query relation. 
\paragraph{Relation Encoding with Motif.}
The relation graph, $G_M,$ constructed in Motif framework has three edges, mainly: $\rmm{tfh}(r_1, r_{2}, r_{3})$, $\rmm{tfh}(r_3, r_{1}, r_{2})$, and $\rmm{tfh}(r_2, r_{3}, r_{1}).$ We have
$\Enc{r_1}{r_1}{0} = \mathbf{1}$ and $\Enc{r_i}{r_1}{0} = \mathbf{0}, i\neq 1.$ Thus,
\begin{align*}
    \Enc{r_1}{r_1}{1} &= \rmm{UP}\big(\Enc{r_1}{r_1}{0}, \rmm{AGG}\big[ \big\{\rmm{MSG}\big(\big\{\big(\Enc{r'}{r_1}{0}, \mf{tfh}\big) 
    \vert r' \in \{ r_2, r_3\}\big\}\big)\big\}\big]\big)\\
    &= \mathbf{1};\\
    \Enc{r_2}{r_1}{1} &= \rmm{UP}\big(\Enc{r_2}{r_1}{0}, \rmm{AGG}\big[ \big\{\rmm{MSG}\big(\big\{\big(\Enc{r'}{r_1}{0}, \mf{tfh}\big) 
    \vert r' \in \{ r_1, r_3\}\big\}\big)\big\}\big]\big)\\
    &= \mathbf{1}\times \mf{tfh} = \mf{tfh};\\
    \Enc{r_3}{r_1}{1} &= \rmm{UP}\big(\Enc{r_3}{r_1}{0}, \rmm{AGG}\big[ \big\{\rmm{MSG}\big(\big\{\big(\Enc{r'}{r_1}{0}, \mf{tfh}\big) 
    \vert r' \in \{ r_1, r_2\}\big\}\big)\big\}\big]\big)\\
    &= \mathbf{1}\times \mf{tfh} = \mf{tfh};
\end{align*}
We observe that $\Enc{r_2}{r_1}{1} = \Enc{r_3}{r_1}{1}$ and assume that this holds for all $t\leq T$ where $T>1.$ It follows that 
\begin{align*}
    \Enc{r_2}{r_1}{T+1} &= \rmm{UP}\big(\Enc{r_2}{r_1}{T}, \rmm{AGG}\big[ \big\{\rmm{MSG}\big(\big\{\big(\Enc{r'}{r_1}{T}, \mf{tfh}\big) 
    \vert r' \in \{ r_1, r_3\}\big\}\big)\big\}\big]\big)\\
    &= \Enc{r_2}{r_1}{T} + (\Enc{r_1}{r_1}{T} + \Enc{r_3}{r_1}{T})\times \mf{tfh};\\
    \Enc{r_3}{r_1}{T+1} &= \rmm{UP}\big(\Enc{r_3}{r_1}{T}, \rmm{AGG}\big[ \big\{\rmm{MSG}\big(\big\{\big(\Enc{r'}{r_1}{T}, \mf{tfh}\big) 
    \vert r' \in \{ r_1, r_2\}\big\}\big)\big\}\big]\big)\\
    &=\Enc{r_3}{r_1}{T} +  (\Enc{r_1}{r_1}{T} + \Enc{r_2}{r_1}{T})\times \mf{tfh}\\
    &= \Enc{r_2}{r_1}{T} + (\Enc{r_1}{r_1}{T} + \Enc{r_3}{r_1}{T})\times \mf{tfh}\\
    &= \Enc{r_2}{r_1}{T+1}.
\end{align*}
Thus, Motif can't differentiate $r_2$ from $r_3$. 

\paragraph{Relation Encoding with $\Ultra$.} On the other hand, from $\Enc{r_1}{r_1}{0} = \mathbf{1}$ and $\Enc{r_i}{r_1}{0} = \mathbf{0}, i\neq 1,$ $\Ultra$'s relation embedding yields 
\begin{align*}
    \Enc{r_1}{r_1}{1} &= \rmm{UP}\big(\Enc{r_1}{r_1}{0}, \rmm{AGG}\big[ \big\{\rmm{MSG}\big(\big\{\big(\Enc{r'}{r_1}{0}, \mf{fff_c}\big) 
    \vert r' \in \{ r_2\}\big\}\big)\big\}\big]\big)\\
    &= \mathbf{1};\\
    \Enc{r_2}{r_1}{1} &= \rmm{UP}\big(\Enc{r_2}{r_1}{0}, \rmm{AGG}\big[ \big\{\rmm{MSG}\big(\big\{\big(\Enc{r'}{r_1}{0}, \mf{fff_c}\big) 
    \vert r' \in \{r_3\}\big\}\big)\big\}\big]\big)\\
    &= \mathbf{0};\\
    \Enc{r_3}{r_1}{1} &= \rmm{UP}\big(\Enc{r_3}{r_1}{0}, \rmm{AGG}\big[ \big\{\rmm{MSG}\big(\big\{\big(\Enc{r'}{r_1}{0}, \mf{fff_c}\big) 
    \vert r' \in \{ r_1\}\big\}\big)\big\}\big]\big)\\
    &= \mathbf{1}\times \mf{fff_c} = \mf{fff_c}.
\end{align*}
Since  $\Enc{r_{2}}{r_1}{1} \neq \Enc{r_{3}}{r_1}{1},$ let us assume that $\Enc{r_{2,3}}{r_1}{t}$ are distinct vectors for $t < T$ except for $\Enc{r_2}{r_1}{T} = \Enc{r_3}{r_1}{T} = \mathbf{c}_T.$ 
We would then have
\begin{align*}
    \Enc{r_1}{r_1}{T+1} &= \rmm{UP}\big(\Enc{r_1}{r_1}{T}, \rmm{AGG}\big[ \big\{\rmm{MSG}\big(\big\{\big(\Enc{r'}{r_1}{T}, \mf{fff_c}\big) 
    \vert r' \in \{ r_2\}\big\}\big)\big\}\big]\big)\\
    &= \Enc{r_1}{r_1}{T} + \mf{c}_T\mf{\times fff_c};\\
    \Enc{r_2}{r_1}{T+1} &= \rmm{UP}\big(\Enc{r_2}{r_1}{T}, \rmm{AGG}\big[ \big\{\rmm{MSG}\big(\big\{\big(\Enc{r'}{r_1}{T}, \mf{fff_c}\big) 
    \vert r' \in \{r_3\}\big\}\big)\big\}\big]\big)\\
    &= \mf{c}_T  + \mf{c}_T\mf{\times fff_c} ;\\
    \Enc{r_3}{r_1}{T+1} &= \rmm{UP}\big(\Enc{r_3}{r_1}{T}, \rmm{AGG}\big[ \big\{\rmm{MSG}\big(\big\{\big(\Enc{r'}{r_1}{T}, \mf{fff_c}\big) 
    \vert r' \in \{ r_1\}\big\}\big)\big\}\big]\big)\\
    &= \mathbf{c}_T + \Enc{r_1}{r_1}{T}\times \mf{fff_c}\\
    &\neq \Enc{r_2}{r_1}{T+1}
\end{align*}
since $\Enc{r_1}{r_1}{t}$ and $\mf{c}_t$ are multivalued polynomials of indeterminate $\mf{fff_c}$ and constant terms $\mf{1}$ and $\mf{0}$ respectively. In other words, $\Ultra$ is able to distinguish between $r_2$ and $r_3$.
This contradicts our assumption about the expressive power of Motif. 

We can then conclude that $\Ultra$ is at least as expressive as  Motif. 
	
	\section{Datasets}\label{sec:datasets}

Our Experiments have been performed on a multitude of datasets, following \citep{galkin2023towards}. These datasets can be grouped into the following  three subsets:

\begin{itemize}
    \item \textbf{Inductive $(e,r)$}: Inductive link prediction datasets with prediction on new nodes and new relations.
    \item \textbf{Inductive $(e)$}: Datasets for inductive Link prediction on new nodes.
    \item \textbf{Transductive}: Transductive Link prediction on seen nodes and relations.
\end{itemize}

Datasets and corresponding statistics are displayed in tables~\ref{tab:inductive-e-r-statistics}-\ref{tab:transductive-statistics}.

\begin{table*}[t]
    \centering
    \small
    \caption{Statistics of \textbf{inductive} $(e,r)$ link prediction datasets. Triples are the
number of edges given at training, validation, or test graphs, respectively, whereas Valid and Test denote triples to be predicted in the validation and test graphs.}
    \label{tab:inductive-e-r-statistics}
    \begin{tabular}{lccc|cccc|ccccc}
    \toprule
 \multirow{2}{*}{ \textbf{Dataset }} & \multicolumn{3}{c}{ \textbf{Training Graph} } & \multicolumn{4}{c}{ \textbf{Validation Graph} } & \multicolumn{4}{c}{ \textbf{Test Graph} }  \\
 \cmidrule{2-12}
 & \textbf{Entities} & \textbf{Rels} & \textbf{Triples} & \textbf{Entities} & \textbf{Rels} & \textbf{Triples} & \textbf{Valid} & \textbf{Entities} & \textbf{Rels} & \textbf{Triples} & \textbf{Test}  \\
 \midrule
 FB-25  & 5190 & 163 & 91571 & 4097 & 216 & 17147 & 5716 & 4097 & 216 & 17147 & 5716  \\
 FB-50  & 5190 & 153 & 85375 & 4445 & 205 & 11636 & 3879 & 4445 & 205 & 11636 & 3879  \\
 FB-75  & 4659 & 134 & 62809 & 2792 & 186 & 9316 & 3106 & 2792 & 186 & 9316 & 3106  \\
 FB-100  & 4659 & 134 & 62809 & 2624 & 77 & 6987 & 2329 & 2624 & 77 & 6987 & 2329  \\
 WK-25  & 12659 & 47 & 41873 & 3228 & 74 & 3391 & 1130 & 3228 & 74 & 3391 & 1131  \\
 WK-50  & 12022 & 72 & 82481 & 9328 & 93 & 9672 & 3224 & 9328 & 93 & 9672 & 3225  \\
 WK-75  & 6853 & 52 & 28741 & 2722 & 65 & 3430 & 1143 & 2722 & 65 & 3430 & 1144  \\
 WK-100  & 9784 & 67 & 49875 & 12136 & 37 & 13487 & 4496 & 12136 & 37 & 13487 & 4496  \\
 NL-0  & 1814 & 134 & 7796 & 2026 & 112 & 2287 & 763 & 2026 & 112 & 2287 & 763  \\
 NL-25  & 4396 & 106 & 17578 & 2146 & 120 & 2230 & 743 & 2146 & 120 & 2230 & 744  \\
 NL-50  & 4396 & 106 & 17578 & 2335 & 119 & 2576 & 859 & 2335 & 119 & 2576 & 859  \\
 NL-75  & 2607 & 96 & 11058 & 1578 & 116 & 1818 & 606 & 1578 & 116 & 1818 & 607  \\
 NL-100  & 1258 & 55 & 7832 & 1709 & 53 & 2378 & 793 & 1709 & 53 & 2378 & 793  \\
 Metafam  & 1316 & 28 & 13821 & 1316 & 28 & 13821 & 590 & 656 & 28 & 7257 & 184   \\
 FBNELL  & 4636 & 100 & 10275 & 4636 & 100 & 10275 & 1055 & 4752 & 183 & 10685 & 597   \\
 \midrule
 Wiki MT1 tax  & 10000 & 10 & 17178 & 10000 & 10 & 17178 & 1908 & 10000 & 9 & 16526 & 1834   \\
 Wiki MT1 health  & 10000 & 7 & 14371 & 10000 & 7 & 14371 & 1596 & 10000 & 7 & 14110 & 1566   \\
 Wiki MT2 org  & 10000 & 10 & 23233 & 10000 & 10 & 23233 & 2581 & 10000 & 11 & 21976 & 2441  \\
 Wiki MT2 sci  & 10000 & 16 & 16471 & 10000 & 16 & 16471 & 1830 & 10000 & 16 & 14852 & 1650  \\
 Wiki MT3 art  & 10000 & 45 & 27262 & 10000 & 45 & 27262 & 3026 & 10000 & 45 & 28023 & 3113  \\
 Wiki MT3 infra  & 10000 & 24 & 21990 & 10000 & 24 & 21990 & 2443 & 10000 & 27 & 21646 & 2405  \\
 Wiki MT4 sci  & 10000 & 42 & 12576 & 10000 & 42 & 12576 & 1397 & 10000 & 42 & 12516 & 1388  \\
 Wiki MT4 health  & 10000 & 21 & 15539 & 10000 & 21 & 15539 & 1725 & 10000 & 20 & 15337 & 1703  \\
\bottomrule
\end{tabular}
\end{table*}

\begin{table*}[t]
    \small
    \centering
     \caption{Statistics of inductive $(e)$ link prediction datasets. Triples are the
number of edges given at training, validation, or test graphs, respectively, whereas Valid and Test denote triples to be predicted in the validation and test graphs.}
\label{tab:inductive-e-statistics}
    \begin{tabular}{lccc|ccc|ccc}
\toprule \multirow{2}{*}{ \textbf{Dataset }} & \multirow{2}{*}{ \textbf{Rels} } & \multicolumn{2}{c}{ \textbf{Training Graph} } & \multicolumn{3}{c}{ \textbf{Validation Graph} } & \multicolumn{3}{c}{ \textbf{Test Graph} }  \\
\cmidrule{3-10} & & \textbf{Entities} & \textbf{Triples} & \textbf{Entities} & \textbf{Triples} & \textbf{Valid} & \textbf{Entities} & \textbf{Triples} & \textbf{Test} \\
\midrule
 FB-v1  & 180 & 1594 & 4245 & 1594 & 4245 & 489 & 1093 & 1993 & 411 \\
 FB-v2 & 200 & 2608 & 9739 & 2608 & 9739 & 1166 & 1660 & 4145 & 947  \\
 FB-v3  & 215 & 3668 & 17986 & 3668 & 17986 & 2194 & 2501 & 7406 & 1731  \\
 FB-v4  & 219 & 4707 & 27203 & 4707 & 27203 & 3352 & 3051 & 11714 & 2840 \\
 WN-v1  & 9 & 2746 & 5410 & 2746 & 5410 & 630 & 922 & 1618 & 373  \\
 WN-v2  & 10 & 6954 & 15262 & 6954 & 15262 & 1838 & 2757 & 4011 & 852  \\
 WN-v3  & 11 & 12078 & 25901 & 12078 & 25901 & 3097 & 5084 & 6327 & 1143  \\
 WN-v4  & 9 & 3861 & 7940 & 3861 & 7940 & 934 & 7084 & 12334 & 2823  \\
 NL-v1  & 14 & 3103 & 4687 & 3103 & 4687 & 414 & 225 & 833 & 201  \\
 NL-v2  & 88 & 2564 & 8219 & 2564 & 8219 & 922 & 2086 & 4586 & 935  \\
 NL-v3  & 142 & 4647 & 16393 & 4647 & 16393 & 1851 & 3566 & 8048 & 1620  \\
 NL-v4  & 76 & 2092 & 7546 & 2092 & 7546 & 876 & 2795 & 7073 & 1447  \\
 ILPC Small  & 48 & 10230 & 78616 & 6653 & 20960 & 2908 & 6653 & 20960 & 2902  \\
 ILPC Large  & 65 & 46626 & 202446 & 29246 & 77044 & 10179 & 29246 & 77044 & 10184 \\
 HM 1k & 11 & 36237 & 93364 & 36311 & 93364 & 1771 & 9899 & 18638 & 476 \\
 HM 3k  & 11 & 32118 & 71097 & 32250 & 71097 & 1201 & 19218 & 38285 & 1349  \\
 HM 5k & 11 & 28601 & 57601 & 28744 & 57601 & 900 & 23792 & 48425 & 2124  \\
 HM Indigo& 229 & 12721 & 121601 & 12797 & 121601 & 14121 & 14775 & 250195 & 14904  \\
\bottomrule
\end{tabular}
\end{table*}

\begin{table*}[t]
    \centering
    
    \caption{Statistics of transductive link prediction datasets. Task denotes the prediction task: $h/t$ is predicting both heads and tails, and $t$ is
predicting only tails. }
    \label{tab:transductive-statistics}
\begin{tabular}{lccccccc}
\toprule \textbf{Dataset} & \textbf{Entities} & \textbf{Rels} & \textbf{Train} & \textbf{Valid} & \textbf{Test} & \textbf{Task} \\
\midrule
FB15k237  & 14541 & 237 & 272115 & 17535 & 20466 & $h/t$ \\
WN18RR  & 40943 & 11 & 86835 & 3034 & 3134 & $h/t$  \\
 CoDEx Small  & 2034 & 42 & 32888 & 1827 & 1828 & $h/t$ \\
CoDEx Medium  & 17050 & 51 & 185584 & 10310 & 10311 & $h/t$  \\
CoDEx Large & 77951 & 69 & 551193 & 30622 & 30622 & $h/t$ \\
NELL995 & 74536 & 200 & 149678 & 543 & 2818 & $h/t$ \\
YAGO310 & 123182 & 37 & 1079040 & 5000 & 5000 & $h/t$  \\
WDsinger  & 10282 & 135 & 16142 & 2163 & 2203 & $h/t$  \\
NELL23k  & 22925 & 200 & 25445 & 4961 & 4952 & $h/t$ \\
FB15k237(10)  & 11512 & 237 & 27211 & 15624 & 18150 & $t$  \\
FB15k237(20) & 13166 & 237 & 54423 & 16963 & 19776 &$t$ \\
FB15k237(50)  & 14149 & 237 & 136057 & 17449 & 20324 & $t$ \\
Hetionet & 45158 & 24 & 2025177 & 112510 & 112510 & $h/t$  \\
ConceptNet100k & 78334 & 34 & 100000 & 1200 & 1200 & $h/t$\\
\bottomrule
\end{tabular}
\end{table*}

	\section{Additional Results}\label{sec:additional-results}
\tabcolsep=0.1cm
\begin{table*}[h!]
\small
\centering
\caption{Full results for $\rm{Ultra}$ and $\Ultra$ models on 10 transductive datasets. Baseline results are taken from \citep{huang2025expressive}.}
\label{tab:model_performance_transductive}
\begin{tabular}{l|cc|cc cc cc cc cc cc}
\toprule
Model & \multicolumn{2}{|c|}{$\rm{Ultra}$} & \multicolumn{12}{c}{$\Ultra$} \\
\cmidrule{2-3} \cmidrule{3-15}
Vocab. & \multicolumn{2}{|c|}{$\cU$} & \multicolumn{2}{c}{$\cV_2^-$} & \multicolumn{2}{c}{$\cV_2$} & \multicolumn{2}{c}{$\cV_2^+$}& \multicolumn{2}{c}{$\cV_3^-$} & \multicolumn{2}{c}{$\cV_3$} & \multicolumn{2}{c}{$\cV_3^+$} \\
\midrule
 \textbf{Dataset} & \textbf{MRR} & \textbf{H10} & \textbf{MRR} & \textbf{H10} & \textbf{MRR} & \textbf{H10} & \textbf{MRR} & \textbf{H10} & \textbf{MRR} & \textbf{H10} & \textbf{MRR} & \textbf{H10} & \textbf{MRR} & \textbf{H10} \\
\midrule
{CoDExSmall} & $.479$ & $.668$ & $.469$ & $\textbf{.675}$ & $\textbf{.486}$ & $.675$ & $.480$ & $.675$ & $.475$ & $.667$ & $.484$ & $.674$ & $.475$ & $.671$ \\
{CoDExLarge} & $.339$ & $.466$ & $.343$ & $.470$ & $.342$ & $.471$ & $.336$ & $.466$ & $\textbf{.343}$ & $\textbf{.473}$ & $.340$ & $.468$ & $.335$ & $.462$ \\
{NELL995} & $.444$ & $.583$ & $.461$ & $.598$ & $.446$ & $.600$ & $\textbf{.507}$ & $\textbf{.644}$ & $.472$ & $.601$ & $.451$ & $.613$ & $.484$ & $.637$ \\
{YAGO310} & $.438$ & $.604$ & $.395$ & $.570$ & $.416$ & $.639$ & $.420$ & $.607$ & $.473$ & $.649$ & $\textbf{.505}$ & $\textbf{.669}$ & $.411$ & $.587$ \\
{WDsinger} & $.388$ & $.495$ & $.363$ & $.500$ & $.402$ & $.505$ & $.392$ & $\textbf{.512}$ & $.388$ & $.503$ & $\textbf{.402}$ & $.511$ & $.401$ & $.509$ \\
{NELL23k} & $.228$ & $.392$ & $.224$ & $.392$ & $.249$ & $.413$ & $.238$ & $.405$ & $.224$ & $.388$ & $\textbf{.250}$ & $\textbf{.419}$ & $.241$ & $.401$ \\
{FB15k237(10)} & $.237$ & $.403$ & $.245$ & $.400$ & $\textbf{.249}$ & $\textbf{.404}$ & $.244$ & $.395$ & $.233$ & $.382$ & $.245$ & $.400$ & $.240$ & $.390$ \\
{FB15k237(20)} & $.268$ & $.436$ & $.271$ & $.438$ & $\textbf{.274}$ & $\textbf{.439}$ & $.270$ & $.433$ & $.265$ & $.425$ & $.268$ & $.431$ & $.238$ & $.398$ \\
{FB15k237(50)} & $.323$ & $.525$ & $.325$ & $.525$ & $\textbf{.329}$ & $\textbf{.527}$ & $.324$ & $.526$ & $.323$ & $.519$ & $.326$ & $.524$ & $.313$ & $.502$ \\
{Hetionet} & $.287$ & $\textbf{.417}$& $.282$ & .410& $\textbf{.301}$ & $.417$ & $.259$ & $.381$ &&& $.278$ & $.405$ & $.280$ & $.390$ \\
\bottomrule
\end{tabular}
\end{table*}

\begin{table*}[h!]
\small
\centering
\caption{Full results for $\rm{Ultra}$ and $\Ultra$ models on 23 inductive $(e,r)$ datasets. Baseline results are taken from \citep{huang2025expressive}.}
\label{tab:model_performance_inductive_er}
\begin{tabular}{l|cc|cc cc cc cc cc cc}
\toprule
Model & \multicolumn{2}{|c|}{$\rm{Ultra}$} & \multicolumn{12}{c}{$\Ultra$} \\
\cmidrule{2-3} \cmidrule{3-15}
Vocab. & \multicolumn{2}{|c|}{$\cU$} & \multicolumn{2}{c}{$\cV_2^-$} & \multicolumn{2}{c}{$\cV_2$} & \multicolumn{2}{c}{$\cV_2^+$}& \multicolumn{2}{c}{$\cV_3^-$} & \multicolumn{2}{c}{$\cV_3$} & \multicolumn{2}{c}{$\cV_3^+$} \\
\midrule
 \textbf{Dataset} & \textbf{MRR} & \textbf{H10} & \textbf{MRR} & \textbf{H10} & \textbf{MRR} & \textbf{H10} & \textbf{MRR} & \textbf{H10} & \textbf{MRR} & \textbf{H10} & \textbf{MRR} & \textbf{H10}  & \textbf{MRR} & \textbf{H10}\\
\midrule
FB-25 & $.385$ & $.636$ & $.386$ & $.639$ & $\textbf{.396}$ & $.639$ & $.394$ & $\textbf{.647}$ & $.384$ & $.638$ & $.393$ & $.643$ & $.391$ & $.645$ \\
FB-50 & $.332$ & $.535$ & $.329$ & $.540$ & $.339$ & $.548$ & $.339$ & $\textbf{.551}$ & $.330$ & $.543$ & $.341$ & $.546$ & $\textbf{.343}$ & $.547$ \\
FB-75 & $.397$ & $.596$ & $\textbf{.404}$ & $\textbf{.609}$ & $.404$ & $.605$ & $.403$ & $.607$ & $.399$ & $.604$ & $.404$ & $.605$ & $.398$ & $.603$ \\
FB-100 & $.443$ & $.626$ & $.435$ & $.627$ & $\textbf{.443}$ & $.625$ & $.439$ & $.633$ & $.438$ & $.628$ & $.443$ & $\textbf{.641}$ & $.431$ & $.638$ \\
WK-25 & $.301$ & $.505$ & $.284$ & $.488$ & $\textbf{.324}$ & $\textbf{.530}$ & $.280$ & $.486$ & $.293$ & $.505$ & $.304$ & $.491$ & $.305$ & $.503$ \\
WK-50 & $.157$ & $.305$ & $.130$ & $.287$ & $\textbf{.174}$ & $\textbf{.321}$ & $.168$ & $.315$ & $.159$ & $.285$ & $.168$ & $.319$ & $.166$ & $.307$ \\
WK-75 & $.375$ & $\textbf{.538}$ & $.373$ & $.517$ & $\textbf{.380}$ & $.537$ & $.367$ & $.516$ & $.364$ & $.533$ & $.371$ & $.533$ & $.374$ & $.524$ \\
WK-100 & $.180$ & $.298$ & $.169$ & $.294$ & $.180$ & $.302$ & $.175$ & $.294$ & $.176$ & $.291$ & $.173$ & $.291$ & $\textbf{.185}$ & $\textbf{.307}$ \\
NL-0 & $.334$ & $.510$ & $.318$ & $.502$ & $.367$ & $.551$ & $.336$ & $.525$ & $.303$ & $.505$ & $\textbf{.370}$ & $\textbf{.566}$ & $.364$ & $.546$ \\
NL-25 & $.373$ & $.544$ & $.313$ & $.495$ & $.370$ & $.552$ & $.349$ & $.507$ & $.358$ & $.542$ & $.349$ & $\textbf{.585}$ & $\textbf{.391}$ & $.573$ \\
NL-50 & $.389$ & $.536$ & $.358$ & $.531$ & $\textbf{.406}$ & $\textbf{.579}$ & $.349$ & $.538$ & $.364$ & $.554$ & $.382$ & $.563$ & $.393$ & $.568$ \\
NL-75 & $.336$ & $.528$ & $.307$ & $.487$ & $.348$ & $.529$ & $.302$ & $.495$ & $.316$ & $.490$ & $.348$ & $.539$ & $\textbf{.349}$ & $\textbf{.546}$ \\
NL-100 & $.442$ & $.636$ & $.401$ & $.627$ & $.477$ & $.681$ & $.449$ & $.660$ & $.444$ & $.657$ & $.479$ & $\textbf{.694}$ & $\textbf{.483}$ & $.690$ \\
Metafam & $.428$ & $.739$ & $.156$ & $.503$ & $.262$ & $.723$ & $\textbf{.484}$ & $\textbf{.962}$ & $.173$ & $.560$ & $.279$ & $.851$ & $.310$ & $.872$ \\
FBNELL & $.461$ & $.631$ & $.463$ & $.634$ & $.484$ & $.659$ & $.482$ & $.652$ & $.471$ & $.640$ & $.492$ & $.647$ & $\textbf{.496}$ & $\textbf{.679}$ \\
Wiki MT1 tax & $.240$ & $.306$ & $.150$ & $.300$ & $.260$ & $\textbf{.436}$ & $.251$ & $.311$ & $.234$ & $.347$ & $\textbf{.286}$ & $.433$ & $.238$ & $.305$ \\
Wiki MT1 health & $.327$ & $.430$ & $.291$ & $.394$ & $.362$ & $.432$ & $.312$ & $.400$ & $.373$ & $.457$ & $\textbf{.375}$ & $\textbf{.458}$ & $.363$ & $.449$ \\
Wiki MT2 org & $.089$ & $.152$ & $.096$ & $.157$ & $\textbf{.098}$ & $.158$ & $.091$ & $.156$ & $.096$ & $.159$ & $.098$ & $\textbf{.163}$ & $.096$ & $.159$ \\
Wiki MT2 sci & $.263$ & $.415$ & $.262$ & $.387$ & $.283$ & $.450$ & $.283$ & $.424$ & $.266$ & $.427$ & $\textbf{.300}$ & $\textbf{.458}$ & $.270$ & $.380$ \\
Wiki MT3 art & $.262$ & $.413$ & $.272$ & $.420$ & $.276$ & $.422$ & $.277$ & $.429$ & $.277$ & $.429$ & $\textbf{.286}$ & $\textbf{.435}$ & $.278$ & $.420$ \\
Wiki MT3 infra & $.634$ & $.769$ & $.647$ & $\textbf{.791}$ & $.637$ & $.783$ & $.640$ & $.774$ & $.624$ & $.755$ & $\textbf{.647}$ & $.782$ & $.638$ & $.765$ \\
Wiki MT4 sci & $.285$ & $.449$ & $.295$ & $.469$ & $.301$ & $.464$ & $.294$ & $.466$ & $\textbf{.301}$ & $.465$ & $.295$ & $\textbf{.471}$ & $.296$ & $.463$ \\
Wiki MT4 health & $\textbf{.625}$ & $\textbf{.755}$ & $.595$ & $.746$ & $.568$ & $.729$ & $.558$ & $.723$ & $.598$ & $.729$ & $.583$ & $.744$ & $.619$ & $.746$ \\
\bottomrule
\end{tabular}
\end{table*}

\begin{table*}[h!]
\small
\centering
\caption{Full results for $\rm{Ultra}$ and $\Ultra$ models on 18 inductive $(e)$ datasets. Baseline results are taken from \citep{huang2025expressive}.}
\label{tab:model_performance_inductive_e}
\begin{tabular}{l|cc|cc cc cc cc cc cc}
\toprule
Model & \multicolumn{2}{|c|}{$\rm{Ultra}$} & \multicolumn{12}{c}{$\Ultra$} \\
\cmidrule{2-3} \cmidrule{3-15}
Vocab. & \multicolumn{2}{|c|}{$\cU$} & \multicolumn{2}{c}{$\cV_2^-$} & \multicolumn{2}{c}{$\cV_2$} & \multicolumn{2}{c}{$\cV_2^+$}& \multicolumn{2}{c}{$\cV_3^-$} & \multicolumn{2}{c}{$\cV_3$} & \multicolumn{2}{c}{$\cV_3^+$} \\
\midrule
 \textbf{Dataset} & \textbf{MRR} & \textbf{H10} & \textbf{MRR} & \textbf{H10} & \textbf{MRR} & \textbf{H10} & \textbf{MRR} & \textbf{H10} & \textbf{MRR} & \textbf{H10} & \textbf{MRR} & \textbf{H10}  & \textbf{MRR} & \textbf{H10}\\
\midrule
FB-v1 & $.498$ & $.653$ & $.468$ & $.656$ & $.498$ & $.661$ & $.477$ & $.670$ & $.492$ & $\textbf{.687}$ & $\textbf{.503}$ & $.663$ & $.503$ & $.678$ \\
FB-v2 & $.504$ & $.695$ & $.502$ & $.707$ & $.510$ & $.703$ & $.512$ & $.697$ & $.507$ & $\textbf{.718}$ & $\textbf{.529}$ & $.716$ & $.525$ & $.712$ \\
FB-v3 & $.489$ & $.656$ & $.479$ & $.648$ & $.488$ & $.650$ & $.489$ & $.661$ & $.488$ & $.654$ & $\textbf{.497}$ & $.660$ & $.494$ & $\textbf{.661}$ \\
FB-v4 & $.478$ & $.665$ & $.477$ & $.675$ & $.488$ & $.675$ & $.485$ & $.678$ & $.474$ & $.670$ & $\textbf{.489}$ & $\textbf{.679}$ & $.489$ & $.677$ \\
WN-v1 & $.658$ & $.764$ & $.203$ & $.555$ & $\textbf{.705}$ & $.792$ & $.697$ & $.796$ & $.655$ & $.763$ & $.703$ & $\textbf{.811}$ & $.690$ & $.794$ \\
WN-v2 & $.648$ & $.749$ & $.642$ & $.762$ & $\textbf{.698}$ & $.786$ & $.455$ & $.741$ & $.637$ & $.743$ & $.676$ & $.783$ & $.687$ & $\textbf{.789}$ \\
WN-v3 & $.367$ & $.464$ & $.387$ & $.505$ & $.361$ & $.514$ & $.358$ & $.523$ & $.373$ & $.479$ & $\textbf{.416}$ & $.539$ & $.413$ & $\textbf{.542}$ \\
WN-v4 & $.603$ & $.704$ & $.598$ & $.711$ & $.651$ & $.730$ & $.568$ & $.718$ & $.605$ & $.711$ & $\textbf{.657}$ & $\textbf{.738}$ & $.644$ & $.723$ \\
NL-v1 & $.694$ & $.896$ & $.524$ & $.771$ & $.739$ & $.920$ & $.583$ & $.866$ & $.597$ & $.644$ & $\textbf{.749}$ & $\textbf{.930}$ & $.585$ & $.866$ \\
NL-v2& $.516$ & $.715$ & $.507$ & $.699$ & $.551$ & $.728$ & $.550$ & $.750$ & $.528$ & $.735$ & $.565$ & $.754$ & $\textbf{.572}$ & $\textbf{.763}$ \\
NL-v3 & $.510$ & $.690$ & $.493$ & $.666$ & $.550$ & $.728$ & $.548$ & $.729$ & $.526$ & $.723$ & $\textbf{.562}$ & $.737$ & $.560$ & $\textbf{.750}$ \\
NL-v4 & $.483$ & $.704$ & $.491$ & $.715$ & $.505$ & $.728$ & $.517$ & $.746$ & $.505$ & $.730$ & $.510$ & $.737$ & $\textbf{.521}$ & $\textbf{.756}$ \\
ILPC small & $.296$ & $.445$ & $\textbf{.304}$ & $.450$ & $.299$ & $.450$ & $.295$ & $.454$ & $.303$ & $.447$ & $.301$ & $.450$ & $.304$ & $\textbf{.456}$ \\
ILPC large & $.292$ & $.417$ & $\textbf{.305}$ & $\textbf{.427}$ & $.287$ & $.423$ & $.290$ & $.426$ & $.299$ & $.424$ & $.297$ & $.419$ & $.297$ & $.423$ \\
HM 1k & $.058$ & $.122$ & $.064$ & $.126$ & $.065$ & $.122$ & $\textbf{.079}$ & $\textbf{.147}$ & $.079$ & $.132$ & $.042$ & $.068$ & $.036$ & $.076$ \\
HM 3k & $.055$ & $.112$ & $.048$ & $.095$ & $.046$ & $.079$ & $\textbf{.065}$ & $\textbf{.116}$ & $.056$ & $.101$ & $.039$ & $.063$ & $.034$ & $.078$ \\
HM 5k & $.051$ & $.103$ & $.045$ & $.091$ & $.043$ & $.080$ & $\textbf{.057}$ & $\textbf{.104}$ & $.051$ & $.093$ & $.032$ & $.054$ & $.030$ & $.070$ \\
HM indigo & $.446$ & $.649$ & $.439$ & $.649$ & $.446$ & $.649$ & $\textbf{.449}$ & $\textbf{.654}$ & $.432$ & $.644$ & $.440$ & $.651$ & $.438$ & $.650$ \\
\bottomrule
\end{tabular}
\end{table*}

In tables \ref{tab:model_performance_transductive} - \ref{tab:model_performance_inductive_e} we display the zero shot results of $\rm{Ultra}$ and our models $\Ultra$ on all datasets. Here we show the models that have been pre-trained on a mixture of $3$ graphs (see Table~\ref{tab:pretraining_mixtures}).

Additionally we performed finetuning of our best model on all datasets considered here. The detailed results are displayed in Tables~\ref{tab:inductive_e_finetuned}, \ref{tab:inductive_er_finetuned} and \ref{tab:transductive_finetuned}. For finetuning we employed dataset specific hyperparameters as displayed in table~\ref{tab:hyperparameters-finetuning}. Hyperparameters common to all datasets are in table~\ref{tab:hyperparameter-global}. Due to hardware constraints we used we used a lower batch size compared to \citep{galkin2023towards}, \citep{Zhang2025TRIX:Graphs} and \citep{huang2025expressive} which might reduce the finetuned performance for Datasets trained with partial datasets.

\begin{table}[htbp]
\centering
\caption{Finetuned Inductive $(e,r)$ Performance Comparison}
\label{tab:inductive_er_finetuned}
\begin{tabular}{lcccccc}
\toprule
\multirow{2}{*}{Dataset} & \multicolumn{2}{c}{$\rm{Ultra}$} & \multicolumn{2}{c}{$\rm{Motif}$} & \multicolumn{2}{c}{$\Ultra[\cV_3^+]$} \\
\cmidrule(lr){2-3} \cmidrule(lr){4-5} \cmidrule(lr){6-7}
 & MRR & H@10 & MRR & H@10 & MRR & H@10 \\
\midrule
FB-25 & 0.383 & 0.635 & 0.388 & 0.635 & \textbf{0.391} & \textbf{0.642} \\
FB-50 & 0.334 & 0.538 & \textbf{0.340} & \textbf{0.544} & 0.333 & 0.541 \\
FB-75 & 0.400 & 0.598 & 0.399 & \textbf{0.607} & \textbf{0.403} & 0.604 \\
FB-100 & 0.444 & \textbf{0.643} & 0.439 & 0.642 & \textbf{0.445} & 0.640 \\
WK-25 & \textbf{0.321} & \textbf{0.535} & 0.317 & 0.505 & 0.298 & 0.487 \\
WK-50 & 0.140 & 0.280 & 0.160 & 0.304 & \textbf{0.162} & \textbf{0.314} \\
WK-75 & 0.380 & 0.530 & 0.371 & \textbf{0.535} & \textbf{0.387} & 0.529 \\
WK-100 & 0.168 & 0.286 & 0.173 & 0.284 & \textbf{0.180} & \textbf{0.294} \\
NL-0 & \textbf{0.329} & 0.551 & 0.328 & \textbf{0.556} & 0.305 & 0.490 \\
NL-25 & \textbf{0.407} & \textbf{0.596} & 0.390 & 0.580 & 0.353 & 0.540 \\
NL-50 & \textbf{0.418} & \textbf{0.595} & 0.414 & 0.573 & 0.399 & 0.579 \\
NL-75 & \textbf{0.374} & \textbf{0.570} & 0.360 & 0.548 & 0.360 & 0.563 \\
NL-100 & 0.458 & \textbf{0.684} & 0.464 & 0.682 & \textbf{0.477} & 0.661 \\
Metafam & 0.997 & \textbf{1.000} & \textbf{1.000} & \textbf{1.000} & \textbf{1.000} & \textbf{1.000} \\
FBNELL & \textbf{0.481} & 0.661 & \textbf{0.481} & \textbf{0.664} & 0.445 & 0.626 \\
MT1-tax & 0.330 & 0.459 & 0.416 & 0.522 & \textbf{0.429} & \textbf{0.533} \\
MT1-health & 0.380 & 0.467 & 0.385 & \textbf{0.473} & \textbf{0.386} & 0.462 \\
MT2-org & 0.104 & 0.170 & \textbf{0.106} & 0.170 & 0.104 & \textbf{0.175} \\
MT2-sci & 0.311 & 0.451 & \textbf{0.326} & \textbf{0.520} & 0.320 & 0.427 \\
MT3-art & 0.306 & 0.473 & \textbf{0.315} & 0.469 & \textbf{0.315} & \textbf{0.479} \\
MT3-infra & 0.657 & 0.807 & \textbf{0.683} & \textbf{0.827} & \textbf{0.683} & 0.821 \\
MT4-sci & 0.303 & 0.478 & 0.309 & 0.483 & \textbf{0.311} & \textbf{0.489} \\
MT4-health & 0.704 & 0.785 & 0.703 & 0.787 & \textbf{0.709} & \textbf{0.788} \\
\bottomrule
\end{tabular}
\end{table}

\begin{table}[htbp]
\centering
\caption{Finetuned Inductive $(e)$ Performance Comparison}
\label{tab:inductive_e_finetuned}
\begin{tabular}{lcccccc}
\toprule
\multirow{2}{*}{Dataset} & \multicolumn{2}{c}{$\rm{Ultra}$} & \multicolumn{2}{c}{$\rm{Motif}$} & \multicolumn{2}{c}{$\Ultra[\cV_3^+]$} \\
\cmidrule(lr){2-3} \cmidrule(lr){4-5} \cmidrule(lr){6-7}
 & MRR & H@10 & MRR & H@10 & MRR & H@10 \\
\midrule
FB-v1 & 0.509 & 0.670 & \textbf{0.530} & \textbf{0.702} & 0.510 & 0.669 \\
FB-v2 & 0.524 & 0.710 & \textbf{0.557} & \textbf{0.744} & 0.540 & 0.730 \\
FB-v3 & 0.504 & 0.663 & \textbf{0.519} & \textbf{0.684} & 0.509 & 0.665 \\
FB-v4 & 0.496 & 0.684 & \textbf{0.508} & \textbf{0.695} & 0.497 & 0.683 \\
WN-v1 & 0.685 & 0.793 & 0.703 & \textbf{0.806} & \textbf{0.705} & 0.789 \\
WN-v2 & 0.679 & 0.779 & 0.680 & \textbf{0.781} & \textbf{0.694} & 0.779 \\
WN-v3 & 0.411 & 0.546 & \textbf{0.466} & \textbf{0.590} & 0.433 & 0.550 \\
WN-v4 & 0.614 & 0.720 & 0.659 & 0.733 & \textbf{0.662} & \textbf{0.748} \\
NL-v1 & 0.757 & 0.878 & 0.712 & 0.873 & \textbf{0.811} & \textbf{0.931} \\
NL-v2 & \textbf{0.575} & 0.761 & 0.566 & \textbf{0.765} & 0.572 & 0.761 \\
NL-v3 & 0.563 & 0.755 & 0.580 & 0.764 & \textbf{0.589} & \textbf{0.781} \\
NL-v4 & 0.469 & 0.733 & 0.507 & 0.740 & \textbf{0.537} & \textbf{0.761} \\
ILPC Small & 0.303 & \textbf{0.453} & 0.302 & 0.449 & \textbf{0.307} & 0.450 \\
ILPC Large & \textbf{0.308} & 0.431 & 0.307 & \textbf{0.432} & 0.313 & \textbf{0.435} \\
HM-1k & 0.042 & 0.100 & \textbf{0.067} & \textbf{0.107} & 0.043 & 0.098 \\
HM-3k & 0.030 & 0.090 & \textbf{0.054} & \textbf{0.103} & 0.025 & 0.062 \\
HM-5k & 0.025 & 0.068 & \textbf{0.049} & \textbf{0.091} & 0.025 & 0.050 \\
HM-Indigo & \textbf{0.432} & \textbf{0.639} & 0.426 & 0.635 & 0.393 & 0.520 \\
\bottomrule
\end{tabular}
\end{table}

\begin{table}[htbp]
\centering
\caption{Finetuned Transductive Performance Comparison}
\label{tab:transductive_finetuned}
\begin{tabular}{lcccccc}
\toprule
\multirow{2}{*}{Dataset} & \multicolumn{2}{c}{$\rm{Ultra}$} & \multicolumn{2}{c}{$\rm{Motif}$} & \multicolumn{2}{c}{$\Ultra[\cV_3^+]$} \\
\cmidrule(lr){2-3} \cmidrule(lr){4-5} \cmidrule(lr){6-7}
 & MRR & H@10 & MRR & H@10 & MRR & H@10 \\
\midrule
CoDEx Small & 0.490 & \textbf{0.686} & 0.490 & 0.680 & \textbf{0.496} & 0.684 \\
CoDEx Large & 0.343 & 0.478 & 0.355 & 0.489 & \textbf{0.359} & \textbf{0.495} \\
NELL-995 & 0.509 & 0.660 & 0.514 & 0.655 & \textbf{0.547} & \textbf{0.678} \\
YAGO310 & 0.557 & 0.710 & 0.603 & \textbf{0.735} & \textbf{0.607} & \textbf{0.735} \\
WDsinger & 0.417 & 0.526 & 0.423 & 0.532 & \textbf{0.431} & \textbf{0.535} \\
NELL23k & 0.268 & 0.450 & 0.256 & 0.441 & \textbf{0.270} & \textbf{0.453} \\
FB15k237(10) & 0.254 & 0.411 & 0.254 & 0.411 & \textbf{0.263} & \textbf{0.416} \\
FB15k237(20) & 0.274 & \textbf{0.445} & 0.273 & 0.444 & \textbf{0.281} & \textbf{0.445} \\
FB15k237(50) & 0.325 & 0.528 & 0.323 & 0.523 & \textbf{0.335} & \textbf{0.531} \\
Hetionet & 0.399 & 0.538 & 0.446 & 0.575 & \textbf{0.464} & \textbf{0.599} \\
\bottomrule
\end{tabular}
\end{table}

\begin{table}[ht!]
    \centering
    \caption{Average finetuned link prediction MRR and H\@10 over 51 KGs. Baseline results are taken from~\citep{huang2025expressive}. $\cP_n,$ O and C stand for n-, open, and closed paths; and N-M stands for many-to-many subgraphs} 
    \begin{tabular}{l|lll|llllll|| ll}
    \toprule
     \multirow{3}{*}{Model} & \multicolumn{3}{c|}{\multirow{2}{*}{Structural Vocabulary}} & \multicolumn{2}{c}{Ind.$(e)$
     } & \multicolumn{2}{c}{Ind.$(e,r)$} & \multicolumn{2}{c}{Transd.} & \multicolumn{2}{c}{Total Avg.} \\
     &&&& \multicolumn{2}{c}{ (18 KGs)} & \multicolumn{2}{c}{ (23 KGs)} & \multicolumn{2}{c}{(10 KGs)} & \multicolumn{2}{c}{(51 KGs)}\\
     \cline{2-12}
     &$\cV$& Definition & $\# \cV$ & MRR & H\@10 & MRR & H\@10& MRR & H\@10 & MRR & H\@10\\
     \midrule

$\rmm{Ultra}$ & $\cU_{2}$&$\cP_2$ & $4$ & $.442$ & $.582$ & $.397$ & $.556$ & $.384$ & $.543$ & $.410$ & $.563$\\ 
$\Motif$ & $\cU_{3}$ &$\cP_3$ & $12$ & $\textbf{.455}$ & $\textbf{.594}$ & $\textbf{.401}$ & $\textbf{.558}$ & $.394$ & $.549$ & $.419$ & $\textbf{.569}$ \\
$\Ultra$ & $\cV_{3}$&O.~\&~C.~$\cP_3$  & $24$ & $\textbf{.455}$ & $.581$ & $.400$ & $.551$ & $\textbf{.405}$ & $\textbf{.557}$ & $\textbf{.420}$ & $.563$\\ 
\bottomrule
    \end{tabular}
    \label{tab:finetuned_averageresults}
\end{table}

\begin{table}[t]
    \centering
    \caption{Hyperparameters for fine-tuning $\Ultra$. Full represents a whole epoch with the entire dataset being used}
    \label{tab:hyperparameters-finetuning}
\begin{tabular}{lcc}
\toprule
\textbf{Datasets} & \textbf{Epoch} & \textbf{Batch per Epoch} \\
\midrule
 FB 25-100 & 3 & full   \\
 WK 25-100 & 3 & full   \\
 NL 0-100 & 3 & full  \\
 MT1-MT4 & 3 & full  \\
 Metafam, FBNELL & 3 & full   \\
 \midrule
 FB v1-v4 & 1 & full   \\
 WN v1-v4 & 1 & full   \\
 NL v1-v4 & 3 & full   \\
 ILPC Small & 3 & full   \\
 ILPC Large & 1 & 1000   \\
 HM 1k-5k, Indigo & 1 & 100   \\
 \midrule
 FB15k237 & 1 & full  \\
 WN18RR & 1 & full   \\
 CoDEx Small & 1 & 4000   \\
 CoDEx Medium & 1 & 4000   \\
 CoDEx Large & 1 & 2000  \\
 NELL-995 & 1 & full   \\
 YAGO310 & 1 & 2000   \\
 WDsinger & 3 & full   \\
 NELL23k & 3 & full   \\
 FB15k237(10) & 1 & full   \\
 FB15k237(20) & 1 & full  \\
 FB15k237(50) & 1 & 1000   \\
 Hetionet & 1 & 4000   \\
\bottomrule
\end{tabular}
\end{table}

\begin{table}[t]
    \centering
    \caption{Global hyper-parameters for fine-tuning.}
    \label{tab:hyperparameter-global}
\begin{tabular}{cc}
\toprule
\textbf{Hyperparameter}& \textbf{Value}\\
\midrule
Optimizer & AdamW \\
Learning rate & 0.0005 \\
Adversarial temperature & 1 \\
\# Negatives & 256 \\
Batch size & 8 \\
\# Repetitions & 1\\
\bottomrule
\end{tabular}
\end{table}
	
	\section{Pretraining Scaling}\label{sec:pretraining-scaling}
We investigated the scaling behavior of our approach with different sizes of the relational vocabulary. For each vocabulary set detailed above we conducted pretraining on each of the pretraining mixtures employed in \citep{galkin2023towards} (see Table~\ref{tab:pretraining_mixtures}). Compared to \citep{galkin2023towards} we had to decrease the batch sizes to be able to train all models with the same parameters.

\begin{table*}[!ht]
    \centering
    \caption{Graphs and training parameters in different pre-training mixtures in Figures~\ref{fig:inductive-e-metrics}, \ref{fig:inductive-er-metrics} and \ref{fig:transductive-metrics}}
    \begin{tabular}{lccccccc}\toprule
			&1 &2 &3 &4 &5 &6 \\\midrule
			FB15k237 &\checkmark &\checkmark &\checkmark &\checkmark &\checkmark &\checkmark \\
			WN18RR & &\checkmark &\checkmark &\checkmark &\checkmark &\checkmark \\
			CoDEx-M & & &\checkmark &\checkmark &\checkmark &\checkmark \\
			NELL995 & & & &\checkmark &\checkmark &\checkmark \\
			YAGO 310 & & & & &\checkmark &\checkmark \\
			ConceptNet100k & & & & & &\checkmark  \\
			\midrule
			Batch size &32 &16 &16 &16 &8 &8\\
			\# steps &200,000 &400,000 &300,000 &400,000 &200,000 &200,000 \\
			\bottomrule
		\end{tabular}
    \label{tab:pretraining_mixtures}
\end{table*}

\begin{figure*}
    \centering
    \includegraphics[width=1\linewidth]{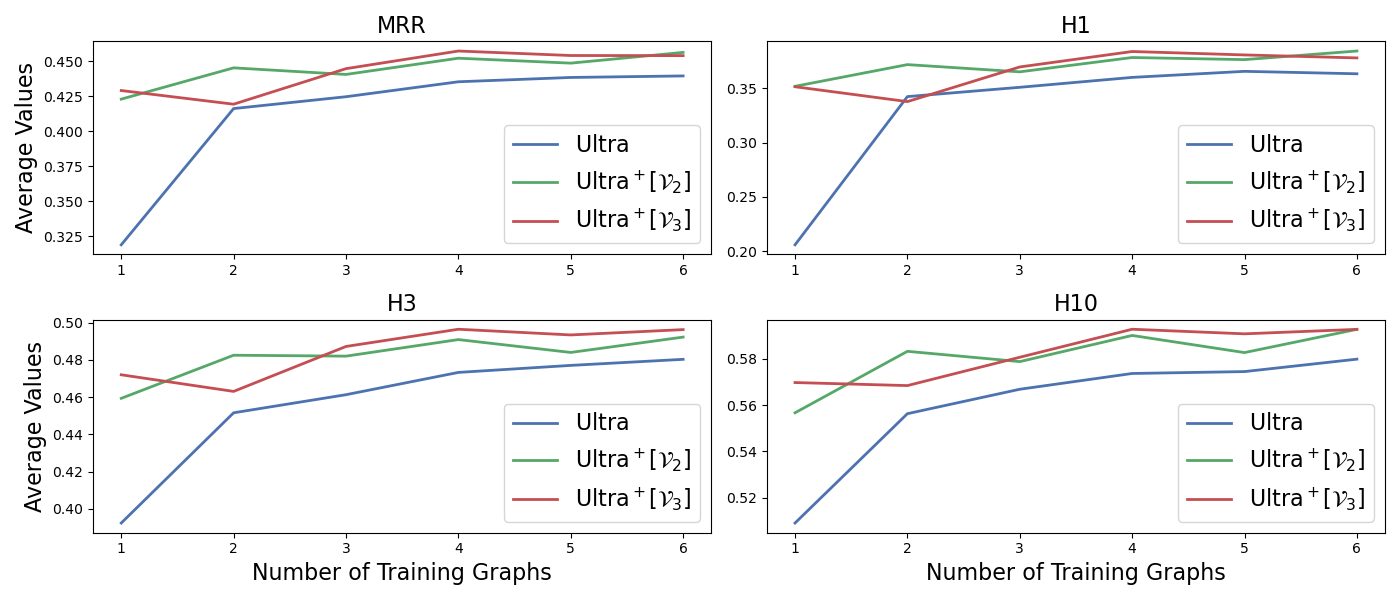}
    \caption{Average performance on 18 inductive (e) datasets of our $\Ultra$ models compared with Ultra, pretrained on 1 - 6 pretraining Graphs (see Table~\ref{tab:pretraining_mixtures}).  }
    \label{fig:inductive-e-metrics}
\end{figure*}
\begin{figure*}
    \centering
    \includegraphics[width=1\linewidth]{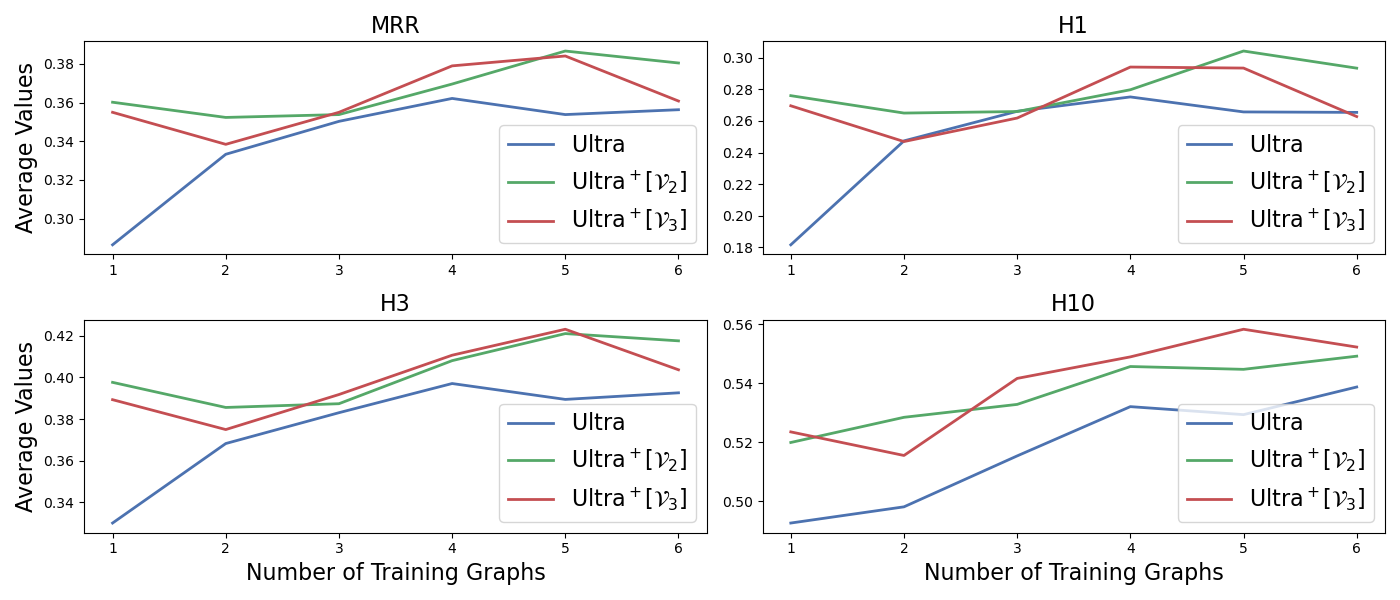}
    \caption{Average performance on 23 inductive (e,r) datasets of our $\Ultra$ models compared with Ultra, pretrained on 1 - 6 pretraining Graphs (see Table~\ref{tab:pretraining_mixtures}).  }
    \label{fig:inductive-er-metrics}
\end{figure*}
\begin{figure*}
    \centering
    \includegraphics[width=1\linewidth]{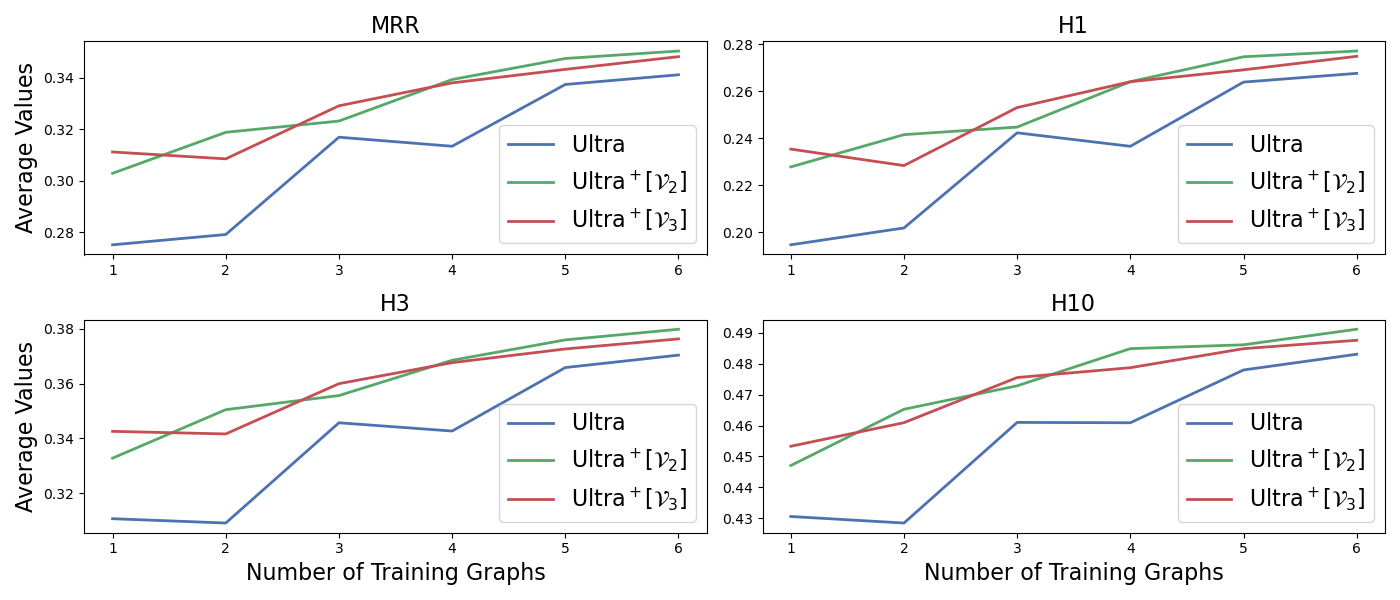}
    \caption{Average performance on 10 transductive datasets of our $\Ultra$ models compared with Ultra, pretrained on 1 - 6 pretraining Graphs (see Table~\ref{tab:pretraining_mixtures}).  }
    \label{fig:transductive-metrics}
\end{figure*}
	
	\section{Details on Relation Graph Computation}
	
	\subsection{Complexity Analysis}

The time complexity of Ultra and Motif are computed in ~\citep{huang2025expressive}. This involves 
\textit{(i)} estimating the  computational complexity of generating the relation graph, by scanning the triples in the KGs and executing the sparse-matrix multiplication, 
\textit{(ii)} in addition to  applying a single forward pass in both the relation and entity encoders. These are  
\begin{equation}
    \label{Eq: complexity Ultra}
    \mathcal{O}\left( \left( \mid \cE \mid^2  \mid \cR \mid + \mid \cE \mid  \mid \cR \mid^2 \right)+L\left( \mid \cR \mid ^2 d +  \mid \cR \mid d^2\right)+L\left( \mid \cT \mid d +  \mid \cE \mid  d^2\right)\right)
\end{equation}
 for {Ultra} and Motif equipped with 2-paths, and 
 \begin{equation}
    \label{Eq: complexity Motif}
    \mathcal{O}\left( \left(\mid \cE \mid \mid \cR \mid^3 + \mid \cE \mid^2  \mid \cR \mid^2 \right)+L\left( \mid \cR \mid ^3 d +  \mid \cR \mid d^2\right)+L\left( \mid \cT \mid d +  \mid \cE \mid  d^2\right)\right)
\end{equation}
for Motif equipped with 3-paths. 
$\Ultra$ replaces step \textit{(i)} by running SPARQL ASK queries on the KGs. This implies we only need to scan the KGs without executing the SPMMs. Since the SPARQL queries in the vocabularies we defined are bounded in size and the filter conditions are simple equalities, their time complexity is $\mathcal{O}(1)$ (refer to Theorem~1 in \citep{perez2006semantics}. Thus, we reduced the time complexity for 2-paths from  $\mathcal{O}\left( \mid \cE \mid^2  \mid \cR \mid + \mid \cE \mid  \mid \cR \mid^2 \right)$ to $\mathcal{O}\left( \mid \cE \mid^2  \mid \cR \mid \right)$
and for 3-paths from  $\mathcal{O}\left( \mid \cE \mid  \mid \cR \mid^3 + \mid \cE \mid^2  \mid \cR \mid^2 \right)$ to $\mathcal{O}\left( \mid \cE \mid  \mid \cR \mid^3\right)$. As $\Ultra$ constructs binary relation graphs, {Ultra} and $\Ultra$ have the same forward pass time complexity for both 2- and 3-paths. In overall, 
\begin{equation}
    \label{Eq: complexity Ultra+2}
    \mathcal{O}\left( \mid \cE \mid^2  \mid \cR \mid +L\left( \mid \cR \mid ^2 d +  \mid \cR \mid d^2\right)+L\left( \mid \cT \mid d +  \mid \cE \mid  d^2\right)\right)
\end{equation}
 for $\Ultra$ equipped with 2-paths, and 
 \begin{equation}
    \label{Eq: complexity Ultra+3}
    \mathcal{O}\left( \mid \cE \mid \mid \cR \mid^3 +L\left( \mid \cR \mid ^2 d +  \mid \cR \mid d^2\right)+L\left( \mid \cT \mid d +  \mid \cE \mid  d^2\right)\right)
\end{equation}
for $\Ultra$ equipped with 3-paths. 




\subsection{Experimental Analysis}
In Table \ref{tab:runtime_comparison_spmm_vs_query_based},  we compare the computation of relation graphs based on sparse matrix multiplication with query-based computation. 
\begin{table*}[!h!]
\centering
\caption{Runtime and memory usage comparison for relation graph computation using  $\cV_2$ with the formulation in \ref{sec:Rel_graph_SPMM} and the Query based computation. Time is displayed in hours:minutes:seconds}
\label{tab:runtime_comparison_spmm_vs_query_based}
\begin{tabular}{lccc | ccc}
\toprule
\multirow{2}{*}{\textbf{Dataset}} & \multicolumn{3}{c}{ \textbf{Query based} } & \multicolumn{3}{c}{ \textbf{SPMM} }\\
\cmidrule{2-7}
&time & RAM usage & VRAM usage & time & RAM & VRAM (GPU) \\
\midrule
WM18RR & 00:00:08 & 10 GB & --  & 00:10:24 & 10 GB & 50 GB\\
FB15k237 & 00:01:03 & 23 GB & -- & 01:52:43 & 40 GB & 50 GB\\
CODEX Medium & 00:00:30 & 12 GB &  -- & 01:07:52 & 10 GB & 50 GB\\

\bottomrule
\end{tabular}
\end{table*}

\textbf{Implementation and Experiment Details} Batching is used in the implementation of the SPMM-based computation of the relation graph, since the intermediate matrix products are too large to fit into memory. Further improvements to our implementation are possible, but not to the extent that the computation will reach the speed of Query based computations.

The Query based relation graph computation was implemented using rdflib \citep{Krech_RDFLib_2025} and the oxrdflib extension based on oxigraph \citep{Pellissier_Tanon_Oxigraph} which provides efficient SPARQL query resolution and supports large datasets.

This comparison has been executed on a Machine with 64cpu (2 * 32 core AMD EPYC) cores 256GB RAM and an Nvidia A100 (80GB) GPU.
	
	\subsection{SPMM formulation of $\Ultra$ Vocabulary} \label{sec:Rel_graph_SPMM}
The adjacency matrices for the relation graphs in \citep{galkin2023towards} and \citep{huang2025expressive} had convenient representaitons in terms of Sparse Matrix Multiplications (SPMM) expressed as products of the adjacency matrix $A \in \R^{n\times n}$ of the KG and two matrices $\E_h\in \R^{n\times m}$ and $E_t \in \R^{m\times n}$. The following formulas are used to construct the binary edges of the relation graph in Motif (see Section F in \citep{huang2025expressive})  and Ultra (see Section B in \citep{galkin2023towards}): 

\begin{align*}
\boldsymbol{A}_{\textit{h2h}} & =\operatorname{spmm}\left(\boldsymbol{E}_h^T, \boldsymbol{E}_h\right) \in \mathbb{R}^{ m \times m} \\
\boldsymbol{A}_{\textit{t2t}} & =\operatorname{spmm}\left(\boldsymbol{E}_t^T, \boldsymbol{E}_t\right) \in \mathbb{R}^{ m \times m} \\
\boldsymbol{A}_{\textit{h2t}} & =\operatorname{spmm}\left(\boldsymbol{E}_h^T, \boldsymbol{E}_t\right) \in \mathbb{R}^{ m \times m} \\
\boldsymbol{A}_{\textit{t2h}} & =\operatorname{spmm}\left(\boldsymbol{E}_t^T, \boldsymbol{E}_h\right) \in \mathbb{R}^{ m \times m} 
\end{align*}

and the 3-ary edges in Motif:

\begin{align*}
\boldsymbol{A}_{\textit{hfh}} & =\operatorname{spmm}\left(\boldsymbol{E}_h^T, \boldsymbol{A}, \boldsymbol{E}_h\right) \in \mathbb{R}^{ m \times m \times m} \\
\boldsymbol{A}_{\textit{tft}} & =\operatorname{spmm}\left(\boldsymbol{E}_t^T, \boldsymbol{A}, \boldsymbol{E}_t\right) \in \mathbb{R}^{ m \times m \times m} \\
\boldsymbol{A}_{\textit{hft}} & =\operatorname{spmm}\left(\boldsymbol{E}_h^T, \boldsymbol{A}, \boldsymbol{E}_t\right) \in \mathbb{R}^{ m \times m \times m} \\
\boldsymbol{A}_{\textit{tfh}} & =\operatorname{spmm}\left(\boldsymbol{E}_t^T, \boldsymbol{A}, \boldsymbol{E}_h\right) \in \mathbb{R}^{ m \times m \times m} .
\end{align*}

The resulting adjacency matrices fail to distinguish between loops and paths that go through the same entity multiple times. 
One way to enable these models to distinguish between closed and open two-paths is to use the full product with masking. For a graphlet $ \rm{g = uv_z}$ in the vocabulary $\cV_2$ where $\rm{u, v\in \set{f,r }}$ and $\rm{z \in \set{0,c}}$ the adjacency matrix is given by $A_{\rm{g}} = (a_{\rm{g}, ij})_{1\leq i,j\leq m} \in \R^{m\times m \times \norm{\cV_2}}$

\begin{align}
    a_{\rm{uv_o},ij} &= \sum_{\substack{l,m,n \\ l\neq m, m\neq n, n\neq l }} \tau(\rm{u},A^i)_{lm} \cdot \tau(\rm{v},A^j)_{mn}, \\
    a_{\rm{uv_c},ij} &= \sum_{\substack{l,m \\ l\neq m }} \tau(\rm{u},A^i)_{lm} \cdot \tau(\rm{v},A^j)_{ml}\;\;  \text{with} \; \rm{u, v \in \set{f,r}}
\end{align}
where $\tau:\set{\rm{r,f}} \times \R^{n\times n} \rightarrow \R^{n\times n}$:
$$\tau(\rm{u}, A):= \begin{cases} A;& \rm{u=f} \\ A^T;& \rm{u=r}. \end{cases} $$

Similar equations hold for longer path.
	
	\subsection{SPARQL queries}\label{sec:sparql_queries}
We display the query patterns for the  Vocabularies employed in our Experiments in Tables~\ref{tab:3r_patterns}-\ref{tab:nm_patterns}
\onecolumn
\begin{longtable}{|l|p{10cm}|}
\caption{3-Relation Pattern SPARQL Queries correspinding to vocabulary $\cV_3$} \label{tab:3r_patterns} \\
\hline
\textbf{Pattern} & \textbf{SPARQL Query} \\
\hline
\endfirsthead

\multicolumn{2}{c}%
{{\bfseries \tablename\ \thetable{} -- continued from previous page}} \\
\hline
\textbf{Pattern} & \textbf{SPARQL Query} \\
\hline
\endhead

\hline \multicolumn{2}{|r|}{{Continued on next page}} \\ \hline
\endfoot

\hline
\endlastfoot

fffo & \begin{lstlisting}
ASK WHERE {
  ?e0 {rel1} ?e1 .
  ?e1 ?rel_0 ?e2 .
  ?e2 {rel2} ?e3 .
  FILTER(?e0 != ?e1 && ?e0 != ?e2 && 
         ?e1 != ?e2 && ?e1 != ?e3 && 
         ?e2 != ?e3 && ?e0 != ?e3)
}
\end{lstlisting} \\
\hline

fffc & \begin{lstlisting}
ASK WHERE {
  ?e0 {rel1} ?e1 .
  ?e1 ?rel_0 ?e2 .
  ?e2 {rel2} ?e0 .
  FILTER(?e0 != ?e1 && ?e1 != ?e2 && 
         ?e0 != ?e2)
}
\end{lstlisting} \\
\hline

ffro & \begin{lstlisting}
ASK WHERE {
  ?e0 {rel1} ?e1 .
  ?e1 ?rel_0 ?e2 .
  ?e3 {rel2} ?e2 .
  FILTER(?e0 != ?e1 && ?e0 != ?e2 && 
         ?e1 != ?e2 && ?e1 != ?e3 && 
         ?e2 != ?e3 && ?e0 != ?e3)
}
\end{lstlisting} \\
\hline

ffrc & \begin{lstlisting}
ASK WHERE {
  ?e0 {rel1} ?e1 .
  ?e1 ?rel_0 ?e2 .
  ?e0 {rel2} ?e2 .
  FILTER(?e0 != ?e1 && ?e1 != ?e2 && 
         ?e0 != ?e2)
}
\end{lstlisting} \\
\hline

frfo & \begin{lstlisting}
ASK WHERE {
  ?e0 {rel1} ?e1 .
  ?e2 ?rel_0 ?e1 .
  ?e2 {rel2} ?e3 .
  FILTER(?e0 != ?e1 && ?e0 != ?e2 && 
         ?e1 != ?e2 && ?e1 != ?e3 && 
         ?e2 != ?e3 && ?e0 != ?e3)
}
\end{lstlisting} \\
\hline

frfc & \begin{lstlisting}
ASK WHERE {
  ?e0 {rel1} ?e1 .
  ?e2 ?rel_0 ?e1 .
  ?e2 {rel2} ?e0 .
  FILTER(?e0 != ?e1 && ?e1 != ?e2 && 
         ?e0 != ?e2)
}
\end{lstlisting} \\
\hline

frro & \begin{lstlisting}
ASK WHERE {
  ?e0 {rel1} ?e1 .
  ?e2 ?rel_0 ?e1 .
  ?e3 {rel2} ?e2 .
  FILTER(?e0 != ?e1 && ?e0 != ?e2 && 
         ?e1 != ?e2 && ?e1 != ?e3 && 
         ?e2 != ?e3 && ?e0 != ?e3)
}
\end{lstlisting} \\
\hline

frrc & \begin{lstlisting}
ASK WHERE {
  ?e0 {rel1} ?e1 .
  ?e2 ?rel_0 ?e1 .
  ?e0 {rel2} ?e2 .
  FILTER(?e0 != ?e1 && ?e1 != ?e2 && 
         ?e0 != ?e2)
}
\end{lstlisting} \\
\hline

rffo & \begin{lstlisting}
ASK WHERE {
  ?e1 {rel1} ?e0 .
  ?e1 ?rel_0 ?e2 .
  ?e2 {rel2} ?e3 .
  FILTER(?e0 != ?e1 && ?e0 != ?e2 && 
         ?e1 != ?e2 && ?e1 != ?e3 && 
         ?e2 != ?e3 && ?e0 != ?e3)
}
\end{lstlisting} \\
\hline

rffc & \begin{lstlisting}
ASK WHERE {
  ?e1 {rel1} ?e0 .
  ?e1 ?rel_0 ?e2 .
  ?e2 {rel2} ?e0 .
  FILTER(?e0 != ?e1 && ?e1 != ?e2 && 
         ?e0 != ?e2)
}
\end{lstlisting} \\
\hline

rfro & \begin{lstlisting}
ASK WHERE {
  ?e1 {rel1} ?e0 .
  ?e1 ?rel_0 ?e2 .
  ?e3 {rel2} ?e2 .
  FILTER(?e0 != ?e1 && ?e0 != ?e2 && 
         ?e1 != ?e2 && ?e1 != ?e3 && 
         ?e2 != ?e3 && ?e0 != ?e3)
}
\end{lstlisting} \\
\hline

rfrc & \begin{lstlisting}
ASK WHERE {
  ?e1 {rel1} ?e0 .
  ?e1 ?rel_0 ?e2 .
  ?e0 {rel2} ?e2 .
  FILTER(?e0 != ?e1 && ?e1 != ?e2 && 
         ?e0 != ?e2)
}
\end{lstlisting} \\
\hline

rrfo & \begin{lstlisting}
ASK WHERE {
  ?e1 {rel1} ?e0 .
  ?e2 ?rel_0 ?e1 .
  ?e2 {rel2} ?e3 .
  FILTER(?e0 != ?e1 && ?e0 != ?e2 && 
         ?e1 != ?e2 && ?e1 != ?e3 && 
         ?e2 != ?e3 && ?e0 != ?e3)
}
\end{lstlisting} \\
\hline

rrfc & \begin{lstlisting}
ASK WHERE {
  ?e1 {rel1} ?e0 .
  ?e2 ?rel_0 ?e1 .
  ?e2 {rel2} ?e0 .
  FILTER(?e0 != ?e1 && ?e1 != ?e2 && 
         ?e0 != ?e2)
}
\end{lstlisting} \\
\hline

rrro & \begin{lstlisting}
ASK WHERE {
  ?e1 {rel1} ?e0 .
  ?e2 ?rel_0 ?e1 .
  ?e3 {rel2} ?e2 .
  FILTER(?e0 != ?e1 && ?e0 != ?e2 && 
         ?e1 != ?e2 && ?e1 != ?e3 && 
         ?e2 != ?e3 && ?e0 != ?e3)
}
\end{lstlisting} \\
\hline

rrrc & \begin{lstlisting}
ASK WHERE {
  ?e1 {rel1} ?e0 .
  ?e2 ?rel_0 ?e1 .
  ?e0 {rel2} ?e2 .
  FILTER(?e0 != ?e1 && ?e1 != ?e2 && 
         ?e0 != ?e2)
}
\end{lstlisting} \\
\end{longtable}

\begin{longtable}{|l|p{10cm}|}
\caption{2-Relation Pattern SPARQL Queries corresponding to vocabulary $\cV_2$} \label{tab:2r_patterns} \\
\hline
\textbf{Pattern} & \textbf{SPARQL Query} \\
\hline
\endfirsthead

\multicolumn{2}{c}%
{{\bfseries \tablename\ \thetable{} -- continued from previous page}} \\
\hline
\textbf{Pattern} & \textbf{SPARQL Query} \\
\hline
\endhead

\hline \multicolumn{2}{|r|}{{Continued on next page}} \\ \hline
\endfoot

\hline
\endlastfoot
ffo & \begin{lstlisting}
ASK WHERE {
  ?e0 {rel1} ?e1 .
  ?e1 {rel2} ?e2 .
  FILTER(?e0 != ?e1 && ?e1 != ?e2 && 
         ?e0 != ?e2)
}
\end{lstlisting} \\
\hline

ffc & \begin{lstlisting}
ASK WHERE {
  ?e0 {rel1} ?e1 .
  ?e1 {rel2} ?e0 .
  FILTER(?e0 != ?e1)
}
\end{lstlisting} \\
\hline

fro & \begin{lstlisting}
ASK WHERE {
  ?e0 {rel1} ?e1 .
  ?e2 {rel2} ?e1 .
  FILTER(?e0 != ?e1 && ?e1 != ?e2 && 
         ?e0 != ?e2)
}
\end{lstlisting} \\
\hline

frc & \begin{lstlisting}
ASK WHERE {
  ?e0 {rel1} ?e1 .
  ?e0 {rel2} ?e1 .
  FILTER(?e0 != ?e1)
}
\end{lstlisting} \\
\hline

rfo & \begin{lstlisting}
ASK WHERE {
  ?e1 {rel1} ?e0 .
  ?e1 {rel2} ?e2 .
  FILTER(?e0 != ?e1 && ?e1 != ?e2 && 
         ?e0 != ?e2)
}
\end{lstlisting} \\
\hline

rfc & \begin{lstlisting}
ASK WHERE {
  ?e1 {rel1} ?e0 .
  ?e1 {rel2} ?e0 .
  FILTER(?e0 != ?e1)
}
\end{lstlisting} \\
\hline

rro & \begin{lstlisting}
ASK WHERE {
  ?e1 {rel1} ?e0 .
  ?e2 {rel2} ?e1 .
  FILTER(?e0 != ?e1 && ?e1 != ?e2 && 
         ?e0 != ?e2)
}
\end{lstlisting} \\
\hline

rrc & \begin{lstlisting}
ASK WHERE {
  ?e1 {rel1} ?e0 .
  ?e0 {rel2} ?e1 .
  FILTER(?e0 != ?e1)
}
\end{lstlisting} \\
\hline

\end{longtable}

\begin{longtable}{|l|p{10cm}|}
\caption{N-M Pattern SPARQL Queries corresponding to vocabulary$\cV_{\bullet}^+$} \label{tab:nm_patterns} \\
\hline
\textbf{Pattern} & \textbf{SPARQL Query} \\
\hline
\endfirsthead

\multicolumn{2}{c}%
{{\bfseries \tablename\ \thetable{} -- continued from previous page}} \\
\hline
\textbf{Pattern} & \textbf{SPARQL Query} \\
\hline
\endhead

\hline \multicolumn{2}{|r|}{{Continued on next page}} \\ \hline
\endfoot

\hline
\endlastfoot

ffo\_1-2 & \begin{lstlisting}
ASK WHERE {
  ?e0 {rel1} ?e1 .
  ?e1 {rel2} ?e2 .
  ?e1 {rel2} ?e3 .
  FILTER(?e0 != ?e1 && ?e1 != ?e2 && 
         ?e2 != ?e3 && ?e3 != ?e0 && 
         ?e0 != ?e2 && ?e1 != ?e2)
}
\end{lstlisting} \\
\hline

fro\_1-2 & \begin{lstlisting}
ASK WHERE {
  ?e0 {rel1} ?e1 .
  ?e2 {rel2} ?e1 .
  ?e3 {rel2} ?e1 .
  FILTER(?e0 != ?e1 && ?e1 != ?e2 && 
         ?e2 != ?e3 && ?e3 != ?e0 && 
         ?e0 != ?e2 && ?e1 != ?e2)
}
\end{lstlisting} \\
\hline

rfo\_1-2 & \begin{lstlisting}
ASK WHERE {
  ?e1 {rel1} ?e0 .
  ?e1 {rel2} ?e2 .
  ?e1 {rel2} ?e3 .
  FILTER(?e0 != ?e1 && ?e1 != ?e2 && 
         ?e2 != ?e3 && ?e3 != ?e0 && 
         ?e0 != ?e2 && ?e1 != ?e2)
}
\end{lstlisting} \\
\hline

rro\_1-2 & \begin{lstlisting}
ASK WHERE {
  ?e1 {rel1} ?e0 .
  ?e2 {rel2} ?e1 .
  ?e3 {rel2} ?e1 .
  FILTER(?e0 != ?e1 && ?e1 != ?e2 && 
         ?e2 != ?e3 && ?e3 != ?e0 && 
         ?e0 != ?e2 && ?e1 != ?e2)
}
\end{lstlisting} \\
\hline

ffo\_2-2 & \begin{lstlisting}
ASK WHERE {
  ?e0 {rel1} ?e2 .
  ?e1 {rel1} ?e2 .
  ?e2 {rel2} ?e3 .
  ?e2 {rel2} ?e4 .
  FILTER(?e0 != ?e1 && ?e1 != ?e2 && 
         ?e2 != ?e3 && ?e3 != ?e0 && 
         ?e0 != ?e2 && ?e1 != ?e2 && 
         ?e4 != ?e0 && ?e4 != ?e1 && 
         ?e4 != ?e2 && ?e4 != ?e3)
}
\end{lstlisting} \\
\hline

fro\_2-2 & \begin{lstlisting}
ASK WHERE {
  ?e0 {rel1} ?e2 .
  ?e1 {rel1} ?e2 .
  ?e3 {rel2} ?e2 .
  ?e4 {rel2} ?e2 .
  FILTER(?e0 != ?e1 && ?e1 != ?e2 && 
         ?e2 != ?e3 && ?e3 != ?e0 && 
         ?e0 != ?e2 && ?e1 != ?e2 && 
         ?e4 != ?e0 && ?e4 != ?e1 && 
         ?e4 != ?e2 && ?e4 != ?e3)
}
\end{lstlisting} \\
\hline

rfo\_2-2 & \begin{lstlisting}
ASK WHERE {
  ?e2 {rel1} ?e0 .
  ?e2 {rel1} ?e1 .
  ?e2 {rel2} ?e3 .
  ?e2 {rel2} ?e4 .
  FILTER(?e0 != ?e1 && ?e1 != ?e2 && 
         ?e2 != ?e3 && ?e3 != ?e0 && 
         ?e0 != ?e2 && ?e1 != ?e2 && 
         ?e4 != ?e0 && ?e4 != ?e1 && 
         ?e4 != ?e2 && ?e4 != ?e3)
}
\end{lstlisting} \\
\hline

rro\_2-2 & \begin{lstlisting}
ASK WHERE {
  ?e2 {rel1} ?e0 .
  ?e2 {rel1} ?e1 .
  ?e3 {rel2} ?e2 .
  ?e4 {rel2} ?e2 .
  FILTER(?e0 != ?e1 && ?e1 != ?e2 && 
         ?e2 != ?e3 && ?e3 != ?e0 && 
         ?e0 != ?e2 && ?e1 != ?e2 && 
         ?e4 != ?e0 && ?e4 != ?e1 && 
         ?e4 != ?e2 && ?e4 != ?e3)
}
\end{lstlisting} \\
\hline

\end{longtable}

\twocolumn
	
\end{document}